\documentclass[conference]{IEEEtran}

\usepackage{stfloats}
\usepackage{ours}
\usepackage[numbers]{natbib}
\usepackage{multicol}
\usepackage[bookmarks=true]{hyperref}
\definecolor{gtcolor}{RGB}{0, 48, 87} 

\usepackage{amsmath,amsfonts,bm}
\usepackage{xparse}

\DeclareDocumentCommand\lf{ g g }{%
        \IfNoValueTF {#1} {\underline{f}^*} {
            \IfNoValueTF {#2} {\underline{f}^{(*)(#1)}}{\underline{f}^{(*)(#1)}_{#2}}
        }
}

\DeclareDocumentCommand\uf{ g g }{%
        \IfNoValueTF {#1} {\overline{f}^*} {
            \IfNoValueTF {#2} {\overline{f}^{(*)(#1)}}{\overline{f}^{(*)(#1)}_{#2}}
        }
}

\DeclareDocumentCommand\x{ g g }{%
        \IfNoValueTF {#1} {x} {
            \IfNoValueTF {#2} {x^{(#1)}}{x^{(#1)}_{#2}}
        }
}

\DeclareDocumentCommand\lx{ g g }{%
        \IfNoValueTF {#1} {\underline{x}} {
            \IfNoValueTF {#2} {\underline{x}^{(#1)}}{\underline{f}^{(#1)}_{#2}}
        }
}

\DeclareDocumentCommand\ux{ g g }{%
        \IfNoValueTF {#1} {\overline{x}} {
            \IfNoValueTF {#2} {\overline{x}^{(#1)}}{\overline{\vx}^{(#1)}_{#2}}
        }
}

\DeclareDocumentCommand\lvx{ g g }{%
        \IfNoValueTF {#1} {\underline{\vx}} {
            \IfNoValueTF {#2} {\underline{\vx}^{(#1)}}{\underline{\vx}^{(#1)}_{#2}}
        }
}

\DeclareDocumentCommand\uvx{ g g }{%
        \IfNoValueTF {#1} {\overline{\vx}} {
            \IfNoValueTF {#2} {\overline{\vx}^{(#1)}}{\overline{\vx}^{(#1)}_{#2}}
        }
}

\DeclareDocumentCommand\lvu{ g g }{%
        \IfNoValueTF {#1} {\underline{\vu}} {
            \IfNoValueTF {#2} {\underline{\vu}^{(#1)}}{\underline{\vu}^{(#1)}_{#2}}
        }
}

\DeclareDocumentCommand\uvu{ g g }{%
        \IfNoValueTF {#1} {\overline{\vu}} {
            \IfNoValueTF {#2} {\overline{\vu}^{(#1)}}{\overline{\vu}^{(#1)}_{#2}}
        }
}

\DeclareDocumentCommand\I{ g g }{%
        \IfNoValueTF {#1} {\mathbf{I}} {
            \IfNoValueTF {#2} {\mathbf{I}^{(#1)}}{\mathbf{I}^{(#1)}_{#2}}
        }
}

\DeclareDocumentCommand\tvx{ g g }{%
        \IfNoValueTF {#1} {\tilde{\vx}} {
            \IfNoValueTF {#2} {\tilde{\vx}^{(#1)}}{\tilde{\vx}^{(#1)}_{#2}}
        }
}

\DeclareDocumentCommand\tvu{ g g }{%
        \IfNoValueTF {#1} {\tilde{\vu}} {
            \IfNoValueTF {#2} {\tilde{\vu}^{(#1)}}{\tilde{\vu}^{(#1)}_{#2}}
        }
}

\DeclareDocumentCommand\tu{ g g }{%
        \IfNoValueTF {#1} {\tilde{u}} {
            \IfNoValueTF {#2} {\tilde{u}^{(#1)}}{\tilde{u}^{(#1)}_{#2}}
        }
}

\DeclareDocumentCommand\W{ g g }{%
        \IfNoValueTF {#1} {\mathbf{W}} {
            \IfNoValueTF {#2} {\mathbf{W}^{(#1)}}{\mathbf{W}^{(#1)}_{#2}}
        }
}

\DeclareDocumentCommand\tileW{ g g }{%
        \IfNoValueTF {#1} {\widetilde{\mathbf{W}}} {
            \IfNoValueTF {#2} {\widetilde{\mathbf{W}}^{(#1)}}{\widetilde{\mathbf{W}}^{(#1)}_{#2}}
        }
}

\DeclareDocumentCommand\tilelowerW{ g g }{%
        \IfNoValueTF {#1} {\widetilde{\mathbf{\underline{W}}}} {
            \IfNoValueTF {#2} {\widetilde{\mathbf{\underline{W}}}^{(#1)}}{\widetilde{\mathbf{\underline{W}}}^{(#1)}_{#2}}
        }
}

\DeclareDocumentCommand\tilex{ g g }{%
        \IfNoValueTF {#1} {\tilde{x}} {
            \IfNoValueTF {#2} {\tilde{x}^{(#1)}}{\tilde{x}^{(#1)}_{#2}}
        }
}

\DeclareDocumentCommand\tilelowerx{ g g }{%
        \IfNoValueTF {#1} {\underline{\tilde{x}}} {
            \IfNoValueTF {#2} {\underline{\tilde{x}}^{(#1)}}{\underline{\tilde{x}}^{(#1)}_{#2}}
        }
}

\DeclareDocumentCommand\lowerx{ g g }{%
        \IfNoValueTF {#1} {\underline{x}} {
            \IfNoValueTF {#2} {\underline{x}^{(#1)}}{\underline{x}^{(#1)}_{#2}}
        }
}

\DeclareDocumentCommand\tileupperx{ g g }{%
        \IfNoValueTF {#1} {\overline{\tilde{x}}} {
            \IfNoValueTF {#2} {\overline{\tilde{x}}^{(#1)}}{\overline{\tilde{x}}^{(#1)}_{#2}}
        }
}

\DeclareDocumentCommand\upperx{ g g }{%
        \IfNoValueTF {#1} {\overline{x}} {
            \IfNoValueTF {#2} {\overline{x}^{(#1)}}{\overline{x}^{(#1)}_{#2}}
        }
}

\DeclareDocumentCommand\tilelowerb{ g g }{%
        \IfNoValueTF {#1} {\widetilde{\mathbf{\underline{b}}}} {
            \IfNoValueTF {#2} {\widetilde{\mathbf{\underline{b}}}^{(#1)}}{\widetilde{\mathbf{\underline{b}}}^{(#1)}_{#2}}
        }
}

\DeclareDocumentCommand\tileb{ g g }{%
        \IfNoValueTF {#1} {\widetilde{\mathbf{b}}} {
            \IfNoValueTF {#2} {\widetilde{\mathbf{b}}^{(#1)}}{\widetilde{\mathbf{b}}^{(#1)}_{#2}}
        }
}

\DeclareDocumentCommand\bias{ g g }{%
        \IfNoValueTF {#1} {\mathbf{b}} {
            \IfNoValueTF {#2} {\mathbf{b}^{(#1)}}{\mathbf{b}^{(#1)}_{#2}}
        }
}

\DeclareDocumentCommand\betavar{ g g }{%
        \IfNoValueTF {#1} {\bm{\beta}} {
            \IfNoValueTF {#2} {{\bm{\beta}^{(#1)}}{}}{\bm{\beta}^{(#1)}_{#2}}
        }
}

\DeclareDocumentCommand\xivar{ g g }{%
        \IfNoValueTF {#1} {\bm{\xi}} {
            \IfNoValueTF {#2} {{\bm{\xi}^{(#1)}}{}}{\bm{\xi}^{(#1)}_{#2}}
        }
}

\DeclareDocumentCommand\xivarn{ g g }{%
        \IfNoValueTF {#1} {\bm{\xi^-}} {
            \IfNoValueTF {#2} {\bm{\xi^-}^{+(#1)}}{\bm{\xi^-}^{+(#1)}_{#2}}
        }
}

\DeclareDocumentCommand\xivarp{ g g }{%
        \IfNoValueTF {#1} {\bm{\xi^+}} {
            \IfNoValueTF {#2} {\bm{\xi^+}^{+(#1)}}{\bm{\xi^+}^{+(#1)}_{#2}}
        }
}

\DeclareDocumentCommand\nuvar{ g g }{%
        \IfNoValueTF {#1} {\bm{\nu}} {
            \IfNoValueTF {#2} {{\bm{\nu}^{(#1)}}{}}{\bm{\nu}^{(#1)}_{#2}{}}
        }
}

\DeclareDocumentCommand\hnuvar{ g g }{%
        \IfNoValueTF {#1} {\bm{\hat{\nu}}} {
            \IfNoValueTF {#2} {{\bm{\hat{\nu}}^{(#1)}}{}}{\bm{\hat{\nu}}^{(#1)}_{#2}{}}
        }
}

\DeclareDocumentCommand\muvar{ g g }{%
        \IfNoValueTF {#1} {\bm{\mu}} {
            \IfNoValueTF {#2} {{\bm{\mu}^{(#1)}}{}}{\bm{\mu}^{(#1)}_{#2}}
        }
}

\DeclareDocumentCommand\gammavar{ g g }{%
        \IfNoValueTF {#1} {\bm{\gamma}} {
            \IfNoValueTF {#2} {{\bm{\gamma}^{(#1)}}{}}{\bm{\gamma}^{(#1)}_{#2}}
        }
}

\DeclareDocumentCommand\lambdavar{ g g }{%
        \IfNoValueTF {#1} {\bm{\lambda}} {
            \IfNoValueTF {#2} {{\bm{\lambda}^{(#1)}}{}}{\bm{\lambda}^{(#1)}_{#2}}
        }
}

\DeclareDocumentCommand\tbetavar{ g g }{%
        \IfNoValueTF {#1} {{\bm{\tilde{\beta}}}} {
            \IfNoValueTF {#2} {{{\bm{\tilde{\beta}}}^{(#1)}}{}}{{{\bm{\tilde{\beta}}}^{(#1)}_{#2}}}
        }
}

\DeclareDocumentCommand\alphavar{ g g }{%
        \IfNoValueTF {#1} {\bm{\alpha}} {
            \IfNoValueTF {#2} {{\bm{\alpha}^{(#1)}}}{\bm{\alpha}^{(#1)}_{#2}}
        }
}

\DeclareDocumentCommand\d{ g g }{%
        \IfNoValueTF {#1} {\mathbf{d}} {
            \IfNoValueTF {#2} {\mathbf{d}^{(#1)}}{\mathbf{d}^{(#1)}_{#2}}
        }
}

\DeclareDocumentCommand\D{ g g }{%
        \IfNoValueTF {#1} {\mathbf{D}} {
            \IfNoValueTF {#2} {\mathbf{D}^{(#1)}}{\mathbf{D}^{(#1)}_{#2}}
        }
}

\DeclareDocumentCommand\lowerD{ g g }{%
        \IfNoValueTF {#1} {\mathbf{\underline{D}}} {
            \IfNoValueTF {#2} {\mathbf{\underline{D}}^{(#1)}}{\mathbf{\underline{D}}^{(#1)}_{#2}}
        }
}

\DeclareDocumentCommand\upperD{ g g }{%
        \IfNoValueTF {#1} {\mathbf{\overline{D}}} {
            \IfNoValueTF {#2} {\mathbf{\overline{D}}^{(#1)}}{\mathbf{\overline{D}}^{(#1)}_{#2}}
        }
}

\DeclareDocumentCommand\A{ g g }{%
        \IfNoValueTF {#1} {\mathbf{A}} {
            \IfNoValueTF {#2} {\mathbf{A}^{(#1)}}{\mathbf{A}^{(#1)}_{#2}}
        }
}

\DeclareDocumentCommand\lowerI{ g g }{%
        \IfNoValueTF {#1} {{\underline{I}}} {
            \IfNoValueTF {#2} {{\underline{I}}^{(#1)}}{{\underline{I}}^{(#1)}_{#2}}
        }
}

\DeclareDocumentCommand\upperI{ g g }{%
        \IfNoValueTF {#1} {{\overline{I}}} {
            \IfNoValueTF {#2} {{\overline{I}}^{(#1)}}{{\overline{I}}^{(#1)}_{#2}}
        }
}

\DeclareDocumentCommand\lowerW{ g g }{%
        \IfNoValueTF {#1} {\mathbf{\underline{W}}} {
            \IfNoValueTF {#2} {\mathbf{\underline{W}}^{(#1)}}{\mathbf{\underline{W}}^{(#1)}_{#2}}
        }
}

\DeclareDocumentCommand\upperW{ g g }{%
        \IfNoValueTF {#1} {\mathbf{\overline{W}}} {
            \IfNoValueTF {#2} {\mathbf{\overline{W}}^{(#1)}}{\mathbf{\overline{W}}^{(#1)}_{#2}}
        }
}

\DeclareDocumentCommand\lowerA{ g g }{%
        \IfNoValueTF {#1} {\mathbf{\underline{A}}} {
            \IfNoValueTF {#2} {\mathbf{\underline{A}}^{(#1)}}{\mathbf{\underline{A}}^{(#1)}_{#2}}
        }
}

\DeclareDocumentCommand\upperA{ g g }{%
        \IfNoValueTF {#1} {\mathbf{\overline{A}}} {
            \IfNoValueTF {#2} {\mathbf{\overline{A}}^{(#1)}}{\mathbf{\overline{A}}^{(#1)}_{#2}}
        }
}

\DeclareDocumentCommand\AA{ g g }{
        \IfNoValueTF {#1} {\mathbf{\Omega}} {
            \IfNoValueTF {#2} {\mathbf{\Omega}(#1, #1)}{\mathbf{\Omega}(#1, #2)}
        }
}

\DeclareDocumentCommand\S{ g g }{%
        \IfNoValueTF {#1} {\mathbf{S}} {
            \IfNoValueTF {#2} {\mathbf{S}^{(#1)}}{\mathbf{S}^{(#1)}_{#2}}
        }
}

\DeclareDocumentCommand\K{ g g }{%
        \IfNoValueTF {#1} {\mathbf{K}} {
            \IfNoValueTF {#2} {\mathbf{K}^{(#1)}}{\mathbf{K}^{(#1)}_{#2}}
        }
}

\DeclareDocumentCommand\B{ g g }{%
        \IfNoValueTF {#1} {\mathbf{B}} {
            \IfNoValueTF {#2} {\mathbf{B}^{(#1)}}{\mathbf{B}^{(#1)}_{#2}}
        }
}

\DeclareDocumentCommand\lowerb{ g g }{%
        \IfNoValueTF {#1} {{\mathbf{\underline{b}}}} {
            \IfNoValueTF {#2} {{\mathbf{\underline{b}}}^{(#1)}}{{\mathbf{\underline{b}}}^{(#1)}_{#2}}
        }
}

\DeclareDocumentCommand\upperb{ g g }{%
        \IfNoValueTF {#1} {\mathbf{\overline{b}}} {
            \IfNoValueTF {#2} {\mathbf{\overline{b}}^{(#1)}}{\mathbf{\overline{b}}^{(#1)}_{#2}}
        }
}

\DeclareDocumentCommand\lowerLambda{ g g }{%
        \IfNoValueTF {#1} {{\mathbf{\underline{\Lambda}}}} {
            \IfNoValueTF {#2} {{\mathbf{\underline{\Lambda}}}^{(#1)}}{{\mathbf{\underline{\Lambda}}}^{(#1)}_{#2}}
        }
}

\DeclareDocumentCommand\upperLambda{ g g }{%
        \IfNoValueTF {#1} {{\mathbf{\overline{\Lambda}}}} {
            \IfNoValueTF {#2} {{\mathbf{\overline{\Lambda}}}^{(#1)}}{{\mathbf{\overline{\Lambda}}}^{(#1)}_{#2}}
        }
}

\DeclareDocumentCommand\lowerDelta{ g g }{%
        \IfNoValueTF {#1} {{\mathbf{\underline{\Delta}}}} {
            \IfNoValueTF {#2} {{\mathbf{\underline{\Delta}}}^{(#1)}}{{\mathbf{\underline{\Delta}}}^{(#1)}_{#2}}
        }
}

\DeclareDocumentCommand\upperDelta{ g g }{%
        \IfNoValueTF {#1} {{\mathbf{\overline{\Delta}}}} {
            \IfNoValueTF {#2} {{\mathbf{\overline{\Delta}}}^{(#1)}}{{\mathbf{\overline{\Delta}}}^{(#1)}_{#2}}
        }
}

\DeclareDocumentCommand\lowerd{ g g }{%
        \IfNoValueTF {#1} {{\mathbf{\underline{d}}}} {
            \IfNoValueTF {#2} {{\mathbf{\underline{d}}}^{(#1)}}{{\mathbf{\underline{d}}}^{(#1)}_{#2}}
        }
}

\DeclareDocumentCommand\upperd{ g g }{%
        \IfNoValueTF {#1} {{\mathbf{\overline{d}}}} {
            \IfNoValueTF {#2} {{\mathbf{\overline{d}}}^{(#1)}}{{\mathbf{\overline{d}}}^{(#1)}_{#2}}
        }
}

\DeclareDocumentCommand\z{ g g }{%
        \IfNoValueTF {#1} {z} {
            \IfNoValueTF {#2} {z^{(#1)}}{z^{(#1)}_{#2}}
        }
}

\DeclareDocumentCommand\hz{ g g }{%
        \IfNoValueTF {#1} {\hat{z}} {
            \IfNoValueTF {#2} {\hat{z}^{(#1)}}{\hat{z}^{(#1)}_{#2}}
        }
}

\DeclareDocumentCommand\hx{ g g }{%
        \IfNoValueTF {#1} {\hat{x}} {
            \IfNoValueTF {#2} {\hat{x}^{(#1)}}{\hat{x}^{(#1)}_{#2}}
        }
}

\DeclareDocumentCommand\hu{ g g }{%
        \IfNoValueTF {#1} {\hat{u}} {
            \IfNoValueTF {#2} {\hat{u}^{(#1)}}{\hat{u}^{(#1)}_{#2}}
        }
}

\DeclareDocumentCommand\bu{ g g }{%
        \IfNoValueTF {#1} {\mathbf{u}} {
            \IfNoValueTF {#2} {\mathbf{u}^{(#1)}}{\mathbf{u}^{(#1)}_{#2}}
        }
}

\DeclareDocumentCommand\bl{ g g }{%
        \IfNoValueTF {#1} {\mathbf{l}} {
            \IfNoValueTF {#2} {\mathbf{l}^{(#1)}}{\mathbf{l}^{(#1)}_{#2}}
        }
}

\DeclareDocumentCommand\aaa{ g }{%
        \IfNoValueTF {#1} {\bm{a}} {
            {\bm{a}^{({#1})}}
        }
}

\DeclareDocumentCommand\haaa{ g }{%
        \IfNoValueTF {#1} {\bm{\hat{a}}} {
            {\bm{\hat{a}}^{({#1})}}
        }
}

\DeclareDocumentCommand\bbb{ g g }{%
        \IfNoValueTF {#1} {\mathbf{P}} {
            \IfNoValueTF {#2} {{\mathbf{P}_{#1}}}{{\mathbf{P}_{#1}^{({#2})}}}
        }
}

\DeclareDocumentCommand\hbbb{ g g }{%
        \IfNoValueTF {#1} {\mathbf{\hat{P}}} {
            \IfNoValueTF {#2} {{\mathbf{\hat{P}}_{#1}}}{{\mathbf{\hat{P}}_{#1}^{({#2})}}}
        }
}

\DeclareDocumentCommand\ccc{ g g }{%
        \IfNoValueTF {#1} {\mathbf{q}} {
            \IfNoValueTF {#2} {{\mathbf{q}_{#1}}}{{\mathbf{q}_{#1}^{(#2)}}{}}
        }
}

\DeclareDocumentCommand\constc{ g }{%
        \IfNoValueTF {#1} {c} {
            {c^{({#1})}}
        }
}

\DeclareDocumentCommand\setz{ g g }{%
        \IfNoValueTF {#1} {\mathcal{Z}} {
            \IfNoValueTF {#2} {\mathcal{Z}^{(#1)}}{\mathcal{Z}^{(#1)}_{#2}}
        }
}

\DeclareDocumentCommand\setzp{ g g }{%
        \IfNoValueTF {#1} {\mathcal{Z^+}} {
            \IfNoValueTF {#2} {\mathcal{Z}^{+(#1)}}{\mathcal{Z}^{+(#1)}_{#2}}
        }
}

\DeclareDocumentCommand\setzn{ g g }{%
        \IfNoValueTF {#1} {\mathcal{Z^-}} {
            \IfNoValueTF {#2} {\mathcal{Z}^{-(#1)}}{\mathcal{Z}^{-(#1)}_{#2}}
        }
}

\DeclareDocumentCommand\tsetz{ g g }{%
        \IfNoValueTF {#1} {\tilde{\mathcal{Z}}} {
            \IfNoValueTF {#2} {\tilde{\mathcal{Z}}^{(#1)}}{\tilde{\mathcal{Z}}^{(#1)}_{#2}}
        }
}

\DeclareDocumentCommand\tz{ g g }{%
        \IfNoValueTF {#1} {\tilde{z}} {
            \IfNoValueTF {#2} {\tilde{z}^{(#1)}}{\tilde{z}^{(#1)}_{#2}}
        }
}

\DeclareDocumentCommand\f{ g g }{%
        \IfNoValueTF {#1} {f} {
            \IfNoValueTF {#2} {f^{(#1)}}{f^{(#1)}_{#2}}
        }
}

\def\Figref#1{Figure~\ref{#1}}

\def\Secref#1{Section~\ref{#1}}
\def\eqref#1{Eq.~\ref{#1}}
\def\Algref#1{Algorithm~\ref{#1}}

\def\Theoref#1{Theorem~\ref{#1}}

\def\Twoprobref#1#2{Problems \ref{#1} and \ref{#2}}

\def\ceil#1{\lceil #1 \rceil}

\def\1{\bm{1}}

\def\vu{{\bm{u}}}

\def\vx{{\bm{x}}}
\def\vy{{\bm{y}}}

\DeclareMathAlphabet{\mathsfit}{\encodingdefault}{\sfdefault}{m}{sl}
\SetMathAlphabet{\mathsfit}{bold}{\encodingdefault}{\sfdefault}{bx}{n}

\def\gB{{\mathcal{B}}}

\def\gD{{\mathcal{D}}}

\def\gL{{\mathcal{L}}}

\def\gR{{\mathcal{R}}}
\def\gS{{\mathcal{S}}}
\def\gT{{\mathcal{T}}}
\def\gU{{\mathcal{U}}}

\def\gX{{\mathcal{X}}}

\def\gZ{{\mathcal{Z}}}

\def\sR{{\mathbb{R}}}

\newcommand{\dtdyn}{f_{\text{dt,dyn}}}
\newcommand{\ctdyn}{f_{\text{ct,dyn}}}
\newcommand{\ctctl}{f_{\text{ct,ctl}}}
\newcommand{\refx}{x_{\text{ref}}}
\newcommand{\refu}{u_{\text{ref}}}
\newcommand{\refy}{y_{\text{ref}}}

\newcommand{\refvy}{\vy_{\text{ref}}}
\newcommand{\ctlu}{u_{\text{ctl}}}

\begin{document}

% paper title
\title{\looseness-1\fontsize{23}{24} \selectfont Parallel Differentiable Reachability for Learning and Planning with Certified Neural Dynamics and Controllers}

\author{\authorblockN{Keyi Shen and Glen Chou}
\authorblockA{Georgia Institute of Technology, Atlanta, GA 30308\\
Email: \texttt{\{kshen84, chou\}@gatech.edu}\\
\textcolor{gtcolor}{\textbf{Project Site and Video}: \href{https://trustworthyrobotics.github.io/diffreach_site/}{(Link)}} \quad $\mid$ \quad \textcolor{gtcolor}{\textbf{Code (DiffReach)}: \href{https://github.com/trustworthyrobotics/DiffReach}{(Github)}} \quad $\mid$ \quad \textcolor{gtcolor}{\textbf{Code (DiffReach-Robotics)}: \href{https://github.com/trustworthyrobotics/DiffReach-Robotics}{(Github)}}
\vspace{-8pt}}}

\maketitle

\begin{abstract}
\looseness-1Neural network (NN) dynamics models and control policies achieve strong performance in robotics, but providing sound guarantees under uncertainty remains difficult, especially for closed-loop NN systems. Existing reachability tools provide formal over-approximations, yet are often non-differentiable, overly conservative, or too slow for modern learning and online planning pipelines. To address this, we present a parallelizable, differentiable reachability framework in JAX for continuous- and discrete-time systems with analytical and NN-based dynamics and controllers. Our framework combines Taylor-model flowpipe construction with CROWN-style linear bound propagation through a unified representation that preserves affine dependencies while supporting GPU-batched computation and automatic differentiation.
Building on this reachability primitive, we develop (i) a certified training method that encourages reachability-friendly dynamics models and controllers, and (ii) a reachability-aware sampling-based MPC scheme with gradient-based refinement. Experiments on non-prehensile manipulation and quadrotor tasks, including hardware and higher-dimensional evaluations (up to 72D), demonstrate practical online planning while maintaining certified reachable-set over-approximations under bounded uncertainty. %Our implementation is avaiable at \url{https://github.com/trustworthyrobotics/DiffReach-Robotics}.
\end{abstract}

\IEEEpeerreviewmaketitle

\begin{figure*}[!b]
    \centering\vspace{-15pt}
    \includegraphics[width=\linewidth]{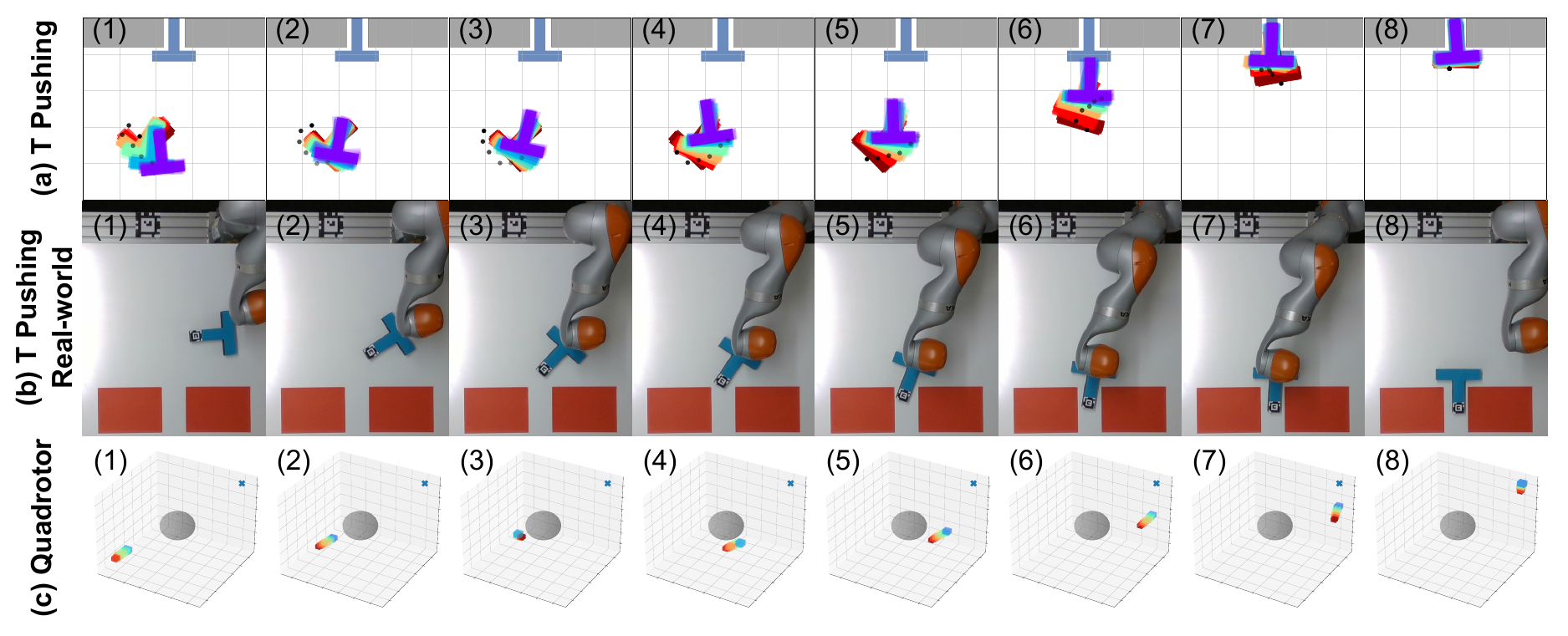}\vspace{-14pt}
    \caption{\looseness-1\textbf{Reachability-guided planning}. 
    We visualize three successful model predictive control (MPC) rollouts of our reachability-guided planner using certified, reachability-regularized models on \textbf{(a)} T-pushing, \textbf{(b)} T-pushing (Real-world) and \textbf{(c)} quadrotor. At each replanning step (indices shown), we plot the current state, the planned trajectory, and the certified reachable set under bounded disturbance (shaded footprints for T-pushing and 3D boxes for quadrotor). The results show that our planner achieves the task while maintaining tight, step-wise formal reachability guarantees under uncertainty. Overall, our parallel differentiable reachability tool can be efficiently integrated into model-training and planning pipelines in robotics, enabling verified-by-construction learning and planning with learned models.\vspace{-6pt}
    }
    
    % \caption{\looseness-1\textbf{Benchmarking certified training}. \textbf{Top}: Across systems, CT/DT, and tasks, our method trains certified dynamics and controllers with negligible performance loss compared to na\"ive training. \textbf{Bottom}: After certified training, the reachable sets computed with our approach are substantially less conservative compared to those computed with the na\"ive models, closely matching a lower bound on the volume (``sampled").}
    \label{fig:planning}
\end{figure*}

\vspace{-3pt}
\section{Introduction}
\vspace{-3pt}

\looseness-1Neural network (NN) dynamics models and controllers are increasingly used in robotics~\citep{li2018learning,manuelli2020keypoints,wu2023daydreamer,shi2023robocook}, enabling operation in complex environments where analytical modeling is infeasible. Despite strong empirical performance, providing robustness and safety guarantees for NN-in-the-loop systems remains challenging, as NNs introduce high-dimensional nonlinear components that complicate worst-case analysis under disturbances and uncertainty~\citep{manzanas_lopez2024arch}. Classical reachability analysis provides formal guarantees through reachable-set over-approximations~\citep{Althoff2015ARCH, chen2013flow}, but often scales poorly and struggles with NN components, leading to conservative bounds. Modern neural network verification (NNV) methods, such as CROWN~\citep{zhang2018efficient,xu2020automatic} and abstract interpretation~\citep{singh2019abstract}, scale to large networks and provide efficient bound propagation, but are primarily designed for static verification and do not naturally handle continuous-time dynamics. Recent efforts combining reachability analysis and NNV~\citep{manzanas_lopez2024arch} support NN-controlled systems, but remain largely offline, non-differentiable, and difficult to integrate into modern robot learning and online planning. As a result, reachability analysis is typically restricted to a \textit{post-hoc} certification mechanism, and these limitations prevent verification from \textit{directly informing} the training and deployment of certifiable models in robot learning. This can lead to costly cycles of non-certified training, failed certification, and retraining.

\looseness-1To address these challenges, we develop an efficient \textbf{parallel and differentiable reachability framework} that integrates certified analysis into robot learning and planning. Examples of reachability-aware planning rollouts are shown in~\Figref{fig:planning}. Our key idea is to unify Taylor-model-based reachability for continuous-time dynamics with CROWN-style NN verification within a single JAX~\citep{jax2018github} computation graph. Our reachability tool, \textit{DiffReach}, combines Taylor-model flowpipe construction for analytical dynamics with CROWN-style linear bound propagation for neural components through a \emph{unified representation} that preserves affine dependencies \emph{without intermediate interval relaxation}. To enable efficient GPU execution and automatic differentiation, reachability propagation is reformulated using \emph{fixed-shape tensor operations} compatible with just-in-time (JIT) compilation and batched computation. The resulting framework supports continuous- and discrete-time systems with analytical and NN-based dynamics and controllers while exposing differentiable reachable-set objectives for downstream optimization. Building on this reachability primitive, we develop \textit{DiffReach-Robotics}, which consists of 1) a \textbf{certified training} method for learning neural dynamics and controllers and 2) a \textbf{reachability-aware sampling-based model predictive control (MPC)} scheme with gradient-based refinement. Experiments on manipulation and quadrotor tasks, including hardware and higher-dimensional evaluations, demonstrate practical online planning while maintaining certified reachable-set over-approximations under bounded uncertainty. Our contributions are:

\begin{itemize}
    \item A \textit{parallel and differentiable reachability framework} combining Taylor-model flowpipes and CROWN-style bound propagation within a unified JAX-based representation for continuous- and discrete-time systems with analytical and NN-based dynamics and controllers.
    \item A \textit{certified training method} that leverages differentiable reachable-set objectives to improve robustness of neural components under uncertainty.
    \item A \textit{reachability-aware sampling-based MPC scheme} that integrates certified reachable sets into online planning with gradient-based refinement.
    \item Experimental evaluation on manipulation and quadrotor tasks, including hardware and higher-dimensional systems (up to 72D), demonstrating practical integration of certified reachability into robot learning and planning.
\end{itemize}

% \vspace{-4pt}
\section{Related Work}
% \vspace{-3pt}

\noindent\textbf{Robust planning and control in robotics.\quad}
Robotic tasks such as manipulation and agile flight involve complex nonlinear dynamics that make robust control under uncertainty challenging. A common approach separates \textit{high-level planning} from \textit{low-level feedback control}~\citep{singh2018robust,herbert2017fastrack,yin2020optimization}. Existing methods compute invariant tubes or safety certificates around nominal trajectories using sums-of-squares~\citep{singh2018robust,majumdar2017funnel}, Hamilton-Jacobi reachability~\citep{herbert2017fastrack}, or contraction-based analysis~\citep{chou2021model,knuth2022statistical,chou2022safe,singh2023robust,knuth2021planning}. While these approaches provide strong guarantees for nonlinear systems, they are typically designed for fixed analytical controllers and offline analysis. Extensions to NN controllers~\citep{knuth2021planning,chou2021model,srinivasan2026safety} similarly focus on post-hoc verification rather than integration into controller learning.

\noindent\textbf{NN dynamics and controllers.\quad} 
NN dynamics models learned from simulation or real-world data are widely used in robotic manipulation and control~\citep{shi2023robocook, Wang-RSS-23}, including models based on visual observations~\citep{finn2016unsupervised,ebert2018visual}, latent representations~\citep{hafner2019dream,wu2023daydreamer}, and structured state representations~\citep{manuelli2020keypoints,shi2022robocraft}. Despite strong empirical performance, most planning methods using NN dynamics lack formal safety guarantees, while certified approaches typically rely on slow offline verification~\citep{knuth2021planning, chou2021model} decoupled from model training and planning. Learned NN controllers~\citep{levine2016end, chi2025diffusion} and heuristic safety filters~\citep{liu2023safe, zhang2023exact, so2024train} similarly lack optimization-aware certified objectives. In contrast, our framework directly incorporates differentiable reachable-set objectives into learning and planning.

\noindent\textbf{Reachability and NNV.\quad}
\looseness-1NNV provides formal guarantees on network outputs. Early methods~\citep{tjeng2019evaluating,bunel2018unified,lu2020neural} struggled to scale due to limited parallelization and GPU support~\citep{salman2019convex,pmlr-v162-zhang22ae,liu2021algorithms}. Bound-propagation methods~\citep{wong2018provable,singh2019abstract,wang2018efficient,gowal2018effectiveness}, including CROWN~\citep{zhang2018efficient}, improve scalability by propagating certified affine bounds through network layers. Practical tools such as \abcrown~\citep{xu2020fast,wang2021beta,zhang2022gcpcrown} achieve strong performance on large networks~\citep{brix2024fifth,kaulen20256th}. NNV has also been applied to robotic control, including Lyapunov and Zubov neural control~\citep{yang2024lyapunov,11312100,li2025two}, neural contraction metrics~\citep{li2025neural}, and branch-and-bound planning with neural dynamics~\citep{shen2024bab}, but these methods primarily certify stability certificates or planning objectives rather than differentiable reachable-set propagation for closed-loop NN systems.

Classical reachability tools address dynamical systems directly. CORA~\citep{Althoff2015ARCH}, JuliaReach~\citep{bogomolov2019juliareach}, and NNV~\citep{tran2020nnv,lopez2023nnv} support reachability analysis for nonlinear and NN-controlled systems\citep{ARCH-COMP24}, while Flow*~\citep{chen2012taylor,chen2013flow,chen2016decomposed,chen2015reachability} develops rigorous Taylor-model flowpipe construction for continuous-time systems~\citep{berz1998verified}. Flow*-based NNCS tools such as POLAR~\citep{huang2022polar,wang2023polar} and CROWN-Reach~\citep{CROWNReach2026} further combine Taylor-model reachability with NN bound propagation for certified NN-in-the-loop analysis. Recent JAX-based tools, including immrax~\citep{harapanahalli2024immrax} and GoTube~\citep{gruenbacher2022gotube}, improve compatibility with GPU acceleration and automatic differentiation, though immrax relies on conservative interval propagation and GoTube provides statistical rather than formal guarantees. While these methods provide strong foundations for certified NN system analysis, they are primarily designed for offline verification rather than differentiable reachability integrated into learning and online planning.

\looseness-1Certified training and verification-aware learning have also been explored in NN verification~\citep{zhang2019towards,huang2021training} and neural control policies~\citep{shi2026certifiedtrainingbranchandboundlyapunovstable, wu2024verifiedsafereinforcementlearning}. In contrast, our framework unifies Taylor-model reachability and scalable NN bound propagation within a differentiable JAX-based framework, enabling reachable-set objectives to directly inform robot learning and online planning.

\vspace{-2pt}
\section{Problem Statement}
\vspace{-2pt}

\looseness-1We consider robotic systems that combine high-level planning with low-level feedback control under bounded uncertainty, where both dynamics models and controllers may contain NN components. Our goal is to compute sound reachable-set over-approximations over finite horizons for both discrete-time planning dynamics and continuous-time closed-loop execution, while exposing differentiable reachable-set objectives that can be integrated into downstream learning and online planning. 
\vspace{-2pt}
\subsection{Planning under bounded uncertainty}
\looseness-1We follow a modular planning-control framework, where a high-level planner optimizes a sequence of actions using a predictive discrete-time dynamics model. Let the planning state and action be \(x\in\sR^n\) and \(u\in\sR^k\), respectively, and consider the predictive dynamics
\(
\hx_{t+1}=\dtdyn(\hx_t,u_t),
\)
where \(\dtdyn\) may be an analytical model or a neural network (NN). Starting from the current state \(\hx_{t_0}=x_{t_0}\), the planner rolls out \(\dtdyn\) over a horizon \(H\) to obtain predicted future states \(\{\hx_t\}_{t=t_0}^{t_0+H}\).

\looseness-1In practice, the true state may deviate from the nominal estimate due to sensing errors, modeling errors, or external disturbances. We model the initial uncertainty as the bounded set
\(
x_{t_0}\in \gB_{\epsilon}(\hx_{t_0})
=\{x\mid \hx_{t_0}-\epsilon \le x \le \hx_{t_0}+\epsilon\}
\) where \(\epsilon\in\sR\) is the perturbation radius. 
Given a fixed action sequence \(\vu:=\{u_t\}_{t=t_0}^{t_0+H}\), we denote by
\(
\gR_t(\vu,\gB_{\epsilon}(\hx_{t_0}))
\)
the reachable set at time \(t\), i.e., the set of all states reachable from initial states in \(\gB_{\epsilon}(\hx_{t_0})\) under the dynamics \(\dtdyn\) and action sequence \(\vu\). In our method, \(\gR_t\) is represented by a sound over-approximation that accounts for bounded uncertainty propagation through the dynamics. 
We consider the following uncertainty-aware finite-horizon trajectory optimization problem:
\begin{equation}\label{eq:trajop_unc}
\begin{aligned}
    \min_{\vu}\quad&\textstyle\sum_{t=t_0}^{t_0+H} c(\hx_t, u_t) \\
    \text{s.t.}\quad 
    &\hx_{t+1} = \dtdyn(\hx_t, u_t), \\
    &G\!\left(\gR_t(\vu,\gB_{\epsilon}(\hx_{t_0}))\right)\ge 0,\quad t=t_0,\dots,t_0+H,
\end{aligned}
\end{equation}
\looseness-1where \(c\) is a task-dependent stage cost, \(H\) is the planning horizon, and actions are box-constrained to \(u_t\in\gU:=\{u\mid \underline{u}\le u\le \overline{u}\}\subset\sR^k\). The generalized constraint functional \(G(\cdot)\) enforces robustness and safety over reachable sets, e.g., collision avoidance for all states in \(\gR_t\) or penalties on reachable-set size.

The central challenge is to compute tight and sound reachable-set over-approximations efficiently enough for repeated use inside online MPC and trajectory optimization.

\subsection{Low-level control under bounded uncertainty}
We now consider low-level feedback control for continuous-time execution. Let
\(
\ctlu=\ctctl(\hx,\refy)\in \sR^{l}
\)
denote a neural controller that takes as input the current state estimate \(\hx\) and a reference signal \(\refy\) (e.g., desired states or actions from the high-level planner), and outputs a fine-grained control input applied to the continuous-time dynamics
\begin{equation}
\dot{x}(\tau)=\ctdyn(x(\tau),\ctlu).
\end{equation}
We assume the dynamics \(\ctdyn\) are available for reachability analysis, either as an analytical ODE (e.g., quadrotor dynamics) or as a learned neural ODE when analytical models are unavailable for complex manipulation systems.

The controller operates at a fixed control frequency \(q_{\text{ctl}}=1/\delta\), typically higher than the planning frequency. Over each control interval \(\tau\in[i\delta,(i+1)\delta]\), we apply a zero-order hold policy: the control input \(\ctlu\) and reference \(\refy\) are treated as constant during the interval, yielding a closed-loop trajectory determined by the ODE solution with initial condition \(x(i\delta)\).

As in the planning setting, the true state may deviate from its nominal estimate due to bounded uncertainty, i.e.,
\(
x(0)\in\gB_{\epsilon}(\hx(0)).
\)
Given a reference sequence \(\refvy\), we seek a sound over-approximation of the closed-loop reachable set
\begin{equation}
\gR(\tau\,;\refvy,\gB_{\epsilon}(\hx(0))),
\end{equation}
i.e., the set of all states reachable under the initial uncertainty and closed-loop execution of \(\ctctl\) conditioned on \(\refvy\).

Beyond post-hoc verification, we use reachable-set information directly during learning. In particular, differentiable summaries of \(\gR(\tau\,;\cdot)\) are incorporated as regularizers or constraints when training controllers and, when applicable, continuous-time dynamics models. This requires the reachability computation to support repeated invocation during optimization while exposing differentiable objectives that can guide parameter updates. 

In summary, we target three problems:
\vspace{-3pt}
\begin{problem}\label{prob:plan}
\textbf{(Reachability-aware planning)}
Given a predictive dynamics model \(\dtdyn\), stage cost \(c\), robustness/safety functional \(G\), nominal initial state \(\hx_0\), disturbance radius \(\epsilon\), and planning horizon \(H\), compute an action sequence \(\vu\) that achieves the task objective while maintaining sound reachable-set over-approximations \(\{\gR_t\}_{t=0}^{H}\) under the initial uncertainty \(x_0\in\gB_{\epsilon}(\hx_0)\). The reachable-set computation should be efficient to support repeated online replanning within MPC.
\end{problem}
\vspace{-4pt}
\begin{problem}\label{prob:learn}
\textbf{(Certified learning of dynamics and control)}
Given a dataset \(\gD\) of \(N\) state--action trajectories and a disturbance radius \(\epsilon\), learn model components (e.g., \(\dtdyn\), \(\ctctl\), and \(\ctdyn\) when applicable) that achieve strong nominal performance while remaining robust to bounded uncertainty, as quantified by reachable-set-based objectives and constraints.
\end{problem}
\vspace{-4pt}
\begin{problem}\label{prob:reach}
\textbf{(Differentiable reachability engine)}
Design a reachability engine that supports the planning and learning problems in~\Twoprobref{prob:plan}{prob:learn}. The engine should handle both continuous- and discrete-time systems with NN-based dynamics and controllers, while supporting exogenous reference inputs such as planned trajectories and tracking references. It should provide sound reachable-set over-approximations efficiently enough for repeated use in online planning and iterative learning, while exposing differentiable reachable-set objectives for downstream optimization.
\end{problem}

% \begin{itemize}
%     \item Inputs
%     \item Outputs
%     \item Problem 1: design a fast, differentiable (not too conservative) reachability tool that is amenable to integration in machine learning pipelines for training certified dynamics and controllers
%     \item Problem 2: design these controllers using the differentiable tool, and use them for robust planning and control.
% \end{itemize}

\begin{figure*}[t]
    \centering%\vspace{-15pt}
    \includegraphics[width=\linewidth]{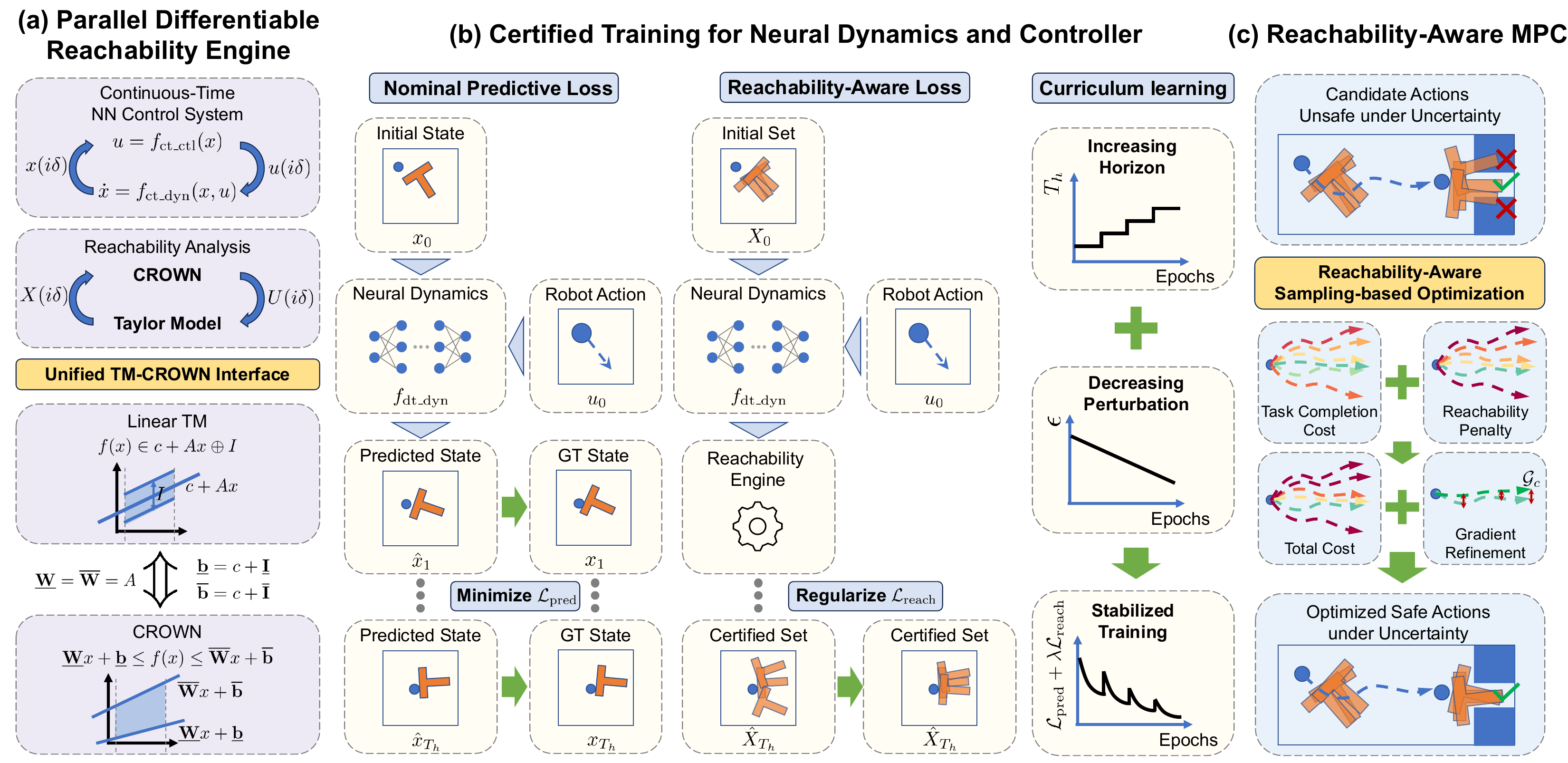}\vspace{-6pt}
    \caption{\textbf{Overview of the proposed framework.} 
\textbf{(a)} Our parallel differentiable reachability engine unifies Taylor-model (TM) flowpipe propagation for analytical dynamics with CROWN-based neural bound propagation through a shared TM--CROWN interface, enabling GPU-batched and differentiable reachability analysis for continuous- and discrete-time systems. 
\textbf{(b)} The differentiable reachable sets are used to enable certified training of neural dynamics and controllers. Standard prediction or tracking losses are augmented with reachability-aware regularization that penalizes large certified reachable sets under bounded uncertainty, while curriculum learning stabilizes optimization over increasing horizons and perturbations. 
\textbf{(c)} The same reachability engine is integrated into sampling-based MPC, where candidate trajectories are evaluated using both task objectives and reachable-set penalties. Parallel reachability evaluation and gradient-based refinement enable uncertainty-aware online planning with certified reachable-set over-approximations.\vspace{-9pt}}\label{fig:method}
\end{figure*}

\vspace{-3pt}
\section{DiffReach: A GPU-Parallelized and Differentiable Reachability Engine}\label{sec:reach}
% \vspace{-2pt}

In this section, we present \textit{DiffReach}: a parallel, differentiable reachability pipeline for NN-in-the-loop systems (\Figref{fig:method}(a)). \Secref{subsec:repre} introduces a fixed-shape Taylor model (TM) representation that enables efficient batched computation. \Secref{subsec:ct_ana_dyn} develops TM flowpipe propagation for continuous-time (CT) analytical dynamics, while \Secref{subsec:ct_nn_dyn} incorporates neural components via CROWN through a unified TM--CROWN interface. Building on this interface, \Secref{subsec:ct_nn_ctl} presents closed-loop reachability for NN feedback systems by alternating controller certification and TM propagation. Finally, \Secref{subsec:dt_dyn} extends the framework to discrete-time (DT) systems via stepwise propagation. Overall, this unifies analytical and NN components under a shared affine representation and enables GPU-parallel reachability for learning and planning.

\subsection{Representation of Reachable Sets}\label{subsec:repre}

We represent reachable sets using \emph{Taylor models} (TMs), which provide a certified representation of nonlinear functions over bounded domains:
\begin{equation}
f(x) = p_k(x) \oplus I, \quad x \in X,
\end{equation}
\looseness-1where $p_k$ is a $k$-th order polynomial approximation and $I = [\underline{I}, \overline{I}]$ soundly bounds truncation and higher-order errors. This ``polynomial + interval'' form enables guaranteed set propagation, since bounding $p_k$ over $X$ and adding $I$ yields a sound enclosure of $f(X)$. Taylor models are also closed under common operations such as addition, multiplication, and composition, making them suitable for nonlinear reachability analysis.

\looseness-1Classical Taylor models use variable-size polynomial representations whose complexity depends on the number of monomials, which grows combinatorially with state dimension and polynomial order. While effective in CPU-based tools, these representations are not well suited for GPU execution, which prefer static tensor shapes and dense matrix operations.

\looseness-1To address this, we adopt a \emph{fixed-shape} TM representation by restricting the polynomial component to linear or quasi-quadratic forms. For an $n$-dim state, we represent each TM as
\begin{equation}
F(z) = c + Az \oplus I, \quad \text{or} \quad F(z) = c + Az + B\,\tau z \oplus I,
\end{equation}
where $z \in \mathbb{R}^{n+1}$ augments the state with time, $c \in \mathbb{R}^n$, $A, B \in \mathbb{R}^{n \times (n+1)}$, and $I \subset \mathbb{R}^n$ is the interval remainder. This yields a fixed $O(n^2)$ representation with static tensor shapes.

The affine term $Az$ preserves first-order correlations, while the quasi-quadratic term $B\,\tau z$ captures limited curvature without incurring the $O(n^3)$ cost of full quadratic parameterizations. As a result, TM operations such as composition and bounding reduce to batched matrix operations that are compatible with GPU acceleration and automatic differentiation.

\subsection{Propagation under Continuous-Time Analytical Dynamics}\label{subsec:ct_ana_dyn}

\looseness-1Given the TM representation above, we now describe reachable-set propagation for continuous-time analytical dynamics. Consider the uncontrolled system
$\dot{x}(\tau) = f_{\mathrm{ct,dyn}}(x(\tau)),$
with uncertain initial state $x(0) \in X_0$. The goal is to compute a sound enclosure of all states reachable over a finite horizon $t$.

We adopt \emph{flowpipe construction}, which propagates the initial set through a sequence of Taylor models. Over a small interval of length $h$, we compute a TM enclosure of the solution using truncated Picard iteration, summarized in~\Theoref{theo:tm_picard_step}.

\begin{theorem}[TM flowpipe step via Picard iteration~\cite{chen2012taylor}]\label{theo:tm_picard_step}
Consider the ODE \(\dot{x}(\tau)=\ctdyn(x(\tau))\) with initial set \(x(0)\in\gX_i\), and fix a step size \(h>0\). The $(i+1)$-th TM flowpipe over $\tau \in [0,h]$ is given by
\[
X^{(i+1)}(x_0, \tau)=p_k(x_0, \tau) \oplus I, \quad x_0\in\gX_i,
\]
where \(p_k\) is obtained via truncated $k$-order Picard iteration:
\begin{equation}\label{eq:picard}
\textstyle g_{j+1}(x_0,\tau) = \text{Trunc}_j\left(x_0 + \int_0^\tau \ctdyn\bigl(g_{j}(x_0,s)\bigr)\,ds\right),
\end{equation}
for $j=0,\dots,k-1$, with $g_0(x_0,\tau)$ initialized as the polynomial part of $X^{(i)}(x_0,h)$.

The interval remainder \(I\) is obtained via Picard-based validation and refinement. Starting from a conservative estimate \(I_0\), we replay the Picard iteration with
\[
g_0(x_0,\tau) = p_k(x_0,\tau) \oplus I_0.
\]
Let \(I_1 := g_1(x_0,\tau) - p_k(x_0,\tau)\) denote the induced remainder after one iteration. If the iteration is contractive, i.e., \(I_1 \subseteq I_0\), we continue refinement to obtain the minimal contractive interval \(I\) subject to termination conditions. Otherwise, \(I_0\) is enlarged until a contractive remainder is achieved.
\end{theorem}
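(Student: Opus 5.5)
The substance of \Theoref{theo:tm_picard_step} is a soundness claim: whenever the validation step succeeds, i.e., \(I_1 \subseteq I_0\), the Taylor model \(p_k(x_0,\tau)\oplus I\) encloses the true solution \(x(\tau;x_0)\) for every \(x_0\in\gX_i\) and every \(\tau\in[0,h]\). The plan is to reduce this to a Schauder-type fixed-point argument for the Picard operator, following the standard Taylor-model validated-integration reasoning of~\cite{chen2012taylor} and~\cite{berz1998verified}.

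\textbf{Step 1: define the operator.} Fix \(x_0\). Consider the Picard operator
\[
\mathcal{P}(g)(\tau)=x_0+\int_0^\tau \ctdyn(g(s))\,ds
\]
acting on continuous functions \(g:[0,h]\to\sR^n\). Let \(\mathcal{S}\) be the set of continuous \(g\) satisfying \(g(\tau)-p_k(x_0,\tau)\in I_0\) for all \(\tau\in[0,h]\).

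\textbf{Step 2: \(\mathcal{P}\) maps \(\mathcal{S}\) into \(\mathcal{S}\).} Taylor-model arithmetic for composition, polynomial integration, and truncation is inclusion-monotone and sound: truncated terms are bounded over the domain and added to the remainder. Hence, for every \(g\in\mathcal{S}\), the image \(\mathcal{P}(g)(\tau)\) lies in the Taylor model \(g_1=p_k\oplus I_1\). The contractivity check \(I_1\subseteq I_0\) then gives \(\mathcal{P}(\mathcal{S})\subseteq\mathcal{S}\).

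\textbf{Step 3: existence of a fixed point in \(\mathcal{S}\).} The set \(\mathcal{S}\) is closed, convex, and bounded. Since \(\ctdyn\) is continuous, it is bounded on the compact range \(p_k([0,h])+I_0\), so \(\mathcal{P}(\mathcal{S})\) is uniformly Lipschitz in \(\tau\) and therefore equicontinuous. By Arzel\`a--Ascoli, \(\mathcal{P}(\mathcal{S})\) is relatively compact, and Schauder's theorem yields a fixed point in \(\mathcal{S}\).

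\textbf{Step 4: the fixed point is the true solution.} A fixed point of \(\mathcal{P}\) is exactly a solution of the ODE with initial value \(x_0\). Assuming \(\ctdyn\) is locally Lipschitz (needed for uniqueness), this fixed point is the unique solution, so \(x(\tau;x_0)\in p_k(x_0,\tau)\oplus I_0\). If \(\ctdyn\) is Lipschitz, Banach's theorem with a sufficiently small \(h\) could replace Schauder in Step 3.

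\textbf{Step 5: the refined remainder is still sound.} After the fixed point is secured, \(x=\mathcal{P}(x)\in p_k\oplus I_1\). Repeating the argument shows that each subsequent iterate \(I_m\) also contains the residual of the true solution, so the final contractive interval \(I\) is a valid enclosure.

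\textbf{Step 6: compose with the initial-set Taylor model.} Because \(g_0\) is initialized from \(X^{(i)}(x_0,h)\), the state at the start of the step is itself given by a Taylor model over \(\gX_i\). Composing via monotonicity of Taylor-model substitution, and inducting over the steps, gives the flowpipe statement.

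\textbf{Scope of the claim.} The polynomial part \(p_k\) needs no accuracy claim, because all error is carried by \(I\). The enlargement loop is a heuristic: no success is guaranteed, and soundness is asserted only upon termination.

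The main obstacle is Step 2. It requires showing that every primitive of the paper's fixed-shape representation is an inclusion-monotone sound enclosure. The primitives involved are the affine or quasi-quadratic truncation, the interval evaluation of the nonlinear \(\ctdyn\) over Taylor models, and the symbolic integral \(\int_0^\tau\). Each needs its remainder bound to hold uniformly in \(\tau\in[0,h]\). I would first prove a small lemma that each elementary operation, including the truncation of the \(B\,\tau z\) term, returns a superset of the pointwise image. I would then chain these lemmas through \eqref{eq:picard}.
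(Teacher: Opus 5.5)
The paper contains no proof of this statement to compare against. It is imported from \cite{chen2012taylor}, and the only in-paper material is the implementation outline in \Algref{alg:ct-tm-flowpipe} (Appendix~\ref{app:reach_ct_ana_dyn}). Your Schauder fixed-point argument is the standard validated-integration soundness proof that this citation rests on, so the route is the right one.

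Steps 1--5 are correct under the hypotheses you add: continuity of $\ctdyn$ for existence and local Lipschitz continuity for uniqueness, neither of which the statement mentions. This includes your observation that once $I_1\subseteq I_0$ holds, every refined $I_m$ still encloses the true residual, because $x=\mathcal{P}(x)$. Two routine details should be written out:
\begin{itemize}
\item \emph{Continuity for Schauder.} You need $\mathcal{P}$ continuous on $\mathcal{S}$ in the sup norm. This follows from uniform continuity of $\ctdyn$ on the compact tube $\{p_k(x_0,s)+r : s\in[0,h],\ r\in I_0\}$.
\item \emph{Remainder integral in Step 2.} Write $q\oplus J$ for the Taylor model of $\ctdyn(p_k\oplus I_0)$. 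If $\ctdyn(g(s))-q(x_0,s)\in J$ for all $s$, then convexity of the interval gives $\int_0^\tau \bigl(\ctdyn(g(s))-q(x_0,s)\bigr)\,ds\in\tau J$.
\end{itemize}

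Step 6 is the thinnest part of the sketch. Read literally, the seed $g_0$ only affects the polynomial $p_k$ and is irrelevant to soundness, since the operator's initial condition is the point $x_0$. What soundness needs is that every true state at the start of the step is an admissible initial condition for $\mathcal{P}$, including the contribution of the remainder of $X^{(i)}(\cdot,h)$.

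Suppose $p_k$ is built in the previous step's domain variables from its polynomial part $q$ alone. Then that remainder $J$ must enter the operator as $\mathcal{P}(g)=q+r+\int_0^\tau\ctdyn(g(s))\,ds$ with $r\in J$, so that $I_1$ absorbs it. Otherwise the contraction check validates a flowpipe from the wrong initial set.

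The paper's algorithm avoids this by preconditioning instead. Picard runs on a remainder-free affine seed $c+Sy$ with $y\in[-1,1]^n$, and the accumulated reparameterization $\mathcal{T}^{(i+1)}$ is composed afterwards in step (D). That is exactly your ``pointwise one-step enclosure, then monotone substitution'' structure. It is sound only if every true boundary state has the form $c+Sy$ with $y$ in the box on which the contraction was validated. Your induction over steps should state that invariant explicitly.
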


Each step constructs a local polynomial approximation of the trajectory over $[0,h]$ while rigorously bounding truncation and higher-order errors through the interval remainder. Iterating this procedure over $N:=\ceil{\frac{t}{h}}$ steps yields a certified reachable tube over horizon $t$. In practice, repeated propagation can accumulate over-approximation error over long horizons. Classical reachability tools mitigate this effect through affine normalization, reparameterization, and wrapping control; we defer these details to Appendix~\ref{app:reach_ct_ana_dyn} and refer to~\cite{chen2015reachability,chen2016decomposed}.

\subsection{Neural Components via CROWN}\label{subsec:ct_nn_dyn}

The flowpipe construction above applies directly to analytical dynamics. However, when dynamics or controllers are represented by neural networks, directly propagating Taylor models (TMs) through the network is often either overly conservative for linear TMs or computationally prohibitive for higher-order TMs. Linear TM propagation corresponds to forward bound propagation through the network, where local relaxations are constructed independently at each nonlinear layer, leading to accumulated over-approximation. Higher-order TMs can improve tightness, but are difficult to implement efficiently with matrix operations and scale poorly to large networks.

In contrast, \emph{CROWN} (\Theoref{theo:crown}) computes affine bounds via backward bound propagation that accounts for downstream layers, often yielding significantly tighter bounds at moderate additional cost. We therefore leverage CROWN (through its JAX implementation in \texttt{jax\_verify}~\cite{jax_verify2020}) to certify neural components within the TM-based pipeline.

\begin{theorem}[CROWN linear bounds of NN~\cite{zhang2018efficient,xu2020automatic}]\label{theo:crown}
    Given an NN \(f_\text{NN}\) with input dimension \(n_i\) and output dimension \(n_o\), and a bounded input \(x\in\gX\) where \(\gX\) is a box set, there exist provable linear lower and upper bounds on \(f_\text{NN}(x)\) such that
    \begin{equation}
        \lowerW x + \lowerb \leq f_\text{NN}(x) \leq \upperW x + \upperb,
    \end{equation}
    where \(\lowerW, \upperW\in \sR^{n_o\times n_i}\) and \(\lowerb, \upperb\in \sR^{n_o}\).
\end{theorem}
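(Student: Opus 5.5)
We prove the statement for the standard class CROWN handles: $f_\text{NN}$ is a composition of affine layers $z^{(\ell)} = W^{(\ell)} a^{(\ell-1)} + b^{(\ell)}$ and elementwise activations $a^{(\ell)} = \sigma(z^{(\ell)})$, with $a^{(0)}=x$ and output $z^{(L)}$. The argument has three steps: bound the pre-activations, relax each activation linearly, and substitute backward from the output to the input. We describe the upper bound; the lower bound is symmetric.

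\emph{Step 1: pre-activation intervals.} Since $\gX$ is a box, we proceed layer by layer and obtain intervals $[l^{(\ell)}, u^{(\ell)}]$ that contain $z^{(\ell)}(x)$ for all $x\in\gX$. We get them by induction on $\ell$, either by interval arithmetic or by applying the very bound being proved to the truncated network ending at layer $\ell$. Both routes are sound, because every quantity involved is finite when $\gX$ is bounded.

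\emph{Step 2: scalar relaxations.} For each neuron $j$ in layer $\ell$ we choose scalars $\alpha^{L},\beta^{L},\alpha^{U},\beta^{U}$ such that
\[
\alpha^{L} z+\beta^{L}\le \sigma(z)\le \alpha^{U} z+\beta^{U}\quad\text{for all } z\in[l_j,u_j].
\]
For ReLU this is explicit:
\begin{itemize}
\item if $u_j\le0$, take both lines equal to $0$;
\item if $l_j\ge0$, take both lines equal to the identity;
\item otherwise, take the chord $\frac{u_j}{u_j-l_j}(z-l_j)$ as the upper line and $\alpha z$ with any $\alpha\in[0,1]$ as the lower line.
\end{itemize}
For a general continuous activation on a compact interval, constant bounds $\min\sigma$ and $\max\sigma$ always exist, so relaxations always exist.

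\emph{Step 3: backward substitution.} This is the main step. We start from $\Lambda^{(L)} = I$, so the output is $\Lambda^{(L)} z^{(L)}$. Suppose we have an upper bound $\Lambda^{(\ell)} z^{(\ell)} + \text{const}$. We substitute $z^{(\ell)} = W^{(\ell)}a^{(\ell-1)}+b^{(\ell)}$, which gives the row coefficients $\Lambda^{(\ell)}W^{(\ell)}$ on $a^{(\ell-1)}$.

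The only subtlety is the sign rule. Where a coefficient on $a^{(\ell-1)}_j$ is nonnegative, we replace $a_j$ by its upper relaxation; where it is negative, we replace $a_j$ by its lower relaxation. Each replacement preserves the direction of the inequality, since multiplying an inequality by a nonnegative scalar keeps it and multiplying by a negative scalar flips it.

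Writing $\Lambda^{(\ell)}W^{(\ell)}$ for the coefficient rows, this yields
\[
\Lambda^{(\ell-1)} = [\Lambda^{(\ell)}W^{(\ell)}]_+ D^{U} + [\Lambda^{(\ell)}W^{(\ell)}]_- D^{L},
\]
where $D^{U}$ and $D^{L}$ are the diagonal matrices of upper and lower slopes. The intercept and bias terms are collected into a constant vector.

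After reaching $a^{(0)}=x$, we set $\upperW=\Lambda^{(0)}$ and let $\upperb$ be the accumulated constant. Then $f_\text{NN}(x)\le \upperW x+\upperb$ holds for all $x\in\gX$, by chaining the inequalities; each inequality is valid because Step 1 guarantees that every $z^{(\ell)}(x)$ lies in its interval, so the relaxations apply. The lower bound follows by swapping the roles of the upper and lower relaxations, or by applying the upper-bound construction to $-f_\text{NN}$.

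For other layer types (sums, products, and general computational graphs as in~\cite{xu2020automatic}) the same argument applies node by node, using linear relaxations of bilinear terms on boxes.
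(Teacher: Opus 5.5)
The paper gives no proof of this statement. It is imported from \cite{zhang2018efficient,xu2020automatic} and used as a black box; the appendix only proves the TM-input version, Theorem~\ref{theo:crown_tm}, by reducing it to this one. Your argument is a correct reconstruction of the standard CROWN proof from those references: sound pre-activation bounds, per-neuron linear relaxations, and sign-split backward substitution. So there is no competing route in the paper to compare against.

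A few small remarks follow.

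\emph{Indexing at the last step.} No activation acts on the input, so the last step is the plain substitution $z^{(1)}=W^{(1)}x+b^{(1)}$, giving $\upperW=\Lambda^{(1)}W^{(1)}$. Your $\Lambda^{(0)}$ agrees with this only if you read the recursion with $D^{U}=D^{L}=I$ at layer~$0$.

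\emph{What the statement actually requires.} As literally stated, the theorem only asserts that some sound affine bounds exist on a box. Your Step~1 alone already proves it, with $\lowerW=\upperW=0$, $\lowerb=l^{(L)}$, $\upperb=u^{(L)}$. What Step~3 buys is tightness, which the statement does not quantify.

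\emph{Shared slopes.} Downstream, the paper needs the shared-slope variant $\lowerW=\upperW$. Your construction does not produce this in general, because the upper and lower relaxations of a neuron typically have different slopes, and the sign split picks different ones in the two directions. You can always recover a shared slope from your two bounds by concretizing the slope gap over the box, row-wise: $f_\text{NN}(x)\ge \lowerW x+\lowerb \ge \upperW x+\lowerb+\min_{x'\in\gX}(\lowerW-\upperW)x'$. Here the minimum of a linear function over a box is available in closed form.
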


A key observation is that a \emph{linear TM} \(p(z)\oplus I = c + Az \oplus I\) is equivalent to a pair of affine bounds with shared slope:
\[
\lowerW=\upperW=A,\quad \lowerb=c+\lowerI,\quad \upperb=c+\upperI.
\]
This allows analytical and neural components to share a common representation. To preserve this structure, we enforce a \emph{shared-slope constraint} in CROWN so that the lower and upper bounds use the same linear term, allowing neural network outputs to be represented again as linear TMs.

In our setting, neural networks are evaluated on set-valued TM inputs. Since standard CROWN assumes box-set inputs, we apply input reparameterization and certify an equivalent network, as summarized in \Theoref{theo:crown_tm}. We defer the proof to Appendix~\ref{app:reach_ct_nn_dyn}.

\begin{theorem}[CROWN bounds of NN with linear TM input]\label{theo:crown_tm}
    Given a neural network \(f_\text{NN}\) with input dimension \(n_i\) and output dimension \(n_o\), and a bounded input \(x\) parameterized by a linear TM \(g(z)=c_g+A_g z\oplus I_g\), where \(z\in\gZ\subset\sR^{n_z}\), \(A\in\sR^{n_i\times n_z}\), and \(I_g=[\lowerI_g,\upperI_g]\subset\sR^{n_i}\), we can construct a linear TM \(p(z)\oplus I = c+Pz \oplus I\) that soundly over-approximates \(f_\text{NN}(g(z))\), where \(c \in \sR^{n_o}, P\in\sR^{n_o\times n_z}\) and \(I=[\lowerI,\upperI]\subset\sR^{n_o}\). In particular, \(P\) and \(I\) can be obtained by applying CROWN (with the enforced constraint \(\lowerW=\upperW\)) to the equivalent network \(f_\text{NN}(c+Az+r)\) with inputs \(z\in\gZ\) and \(r\in I_g\).
\end{theorem}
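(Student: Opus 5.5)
The plan is to reduce to \Theoref{theo:crown} by an input reparameterization that turns the TM-parameterized input into a box input. Define the augmented network $\tilde f(z,r) := f_\text{NN}(c_g + A_g z + r)$, whose input $(z,r)$ ranges over $\gZ\times I_g$. Since $\gZ$ is a box (the normalized TM domain) and $I_g$ is an interval vector, this product is a box in $\sR^{n_z+n_i}$. The map $(z,r)\mapsto c_g+A_gz+r$ is affine, so $\tilde f$ is itself a neural network: prepend a linear layer with weight $[A_g\;\; I]$ and bias $c_g$, or fold it into the first layer of $f_\text{NN}$. By the definition of the TM $g$, every value $x=g(z)$ with $z\in\gZ$ equals $c_g+A_gz+r$ for some $r\in I_g$. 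Hence, for each $z$, the set of possible values of $f_\text{NN}(g(z))$ equals $\{\tilde f(z,r): r\in I_g\}$.

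Next, apply \Theoref{theo:crown} to $\tilde f$ on the box $\gZ\times I_g$, enforcing the shared-slope constraint. This gives matrices $W=[W_z\;\; W_r]$ and offsets $\lowerb\le\upperb$ such that
\[
W_z z + W_r r + \lowerb \le \tilde f(z,r) \le W_z z + W_r r + \upperb
\]
for all $(z,r)\in\gZ\times I_g$. The shared slope can be obtained in several ways. One option is to restrict the CROWN relaxation choice so that the backward-propagated lower and upper slopes coincide. The fallback, which always works, is to take any single valid linear function and set $W:=\lowerW$, then recompute $\upperb$ by bounding $\upperW x+\upperb-\lowerW x$ over the box by concretization. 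This gives a valid upper bound with slope $\lowerW$.

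Then I eliminate $r$. Set $P:=W_z$ and $c:=0$, or choose $c$ as a midpoint, and move $W_r r$ into the remainder. Over $r\in I_g$, the term $W_r r$ lies in the interval obtained by interval matrix-vector multiplication, whose $i$-th entry is
\[
\Big[\textstyle\sum_j \min\big((W_r)_{ij}\lowerI_{g,j},\,(W_r)_{ij}\upperI_{g,j}\big),\ \sum_j \max\big((W_r)_{ij}\lowerI_{g,j},\,(W_r)_{ij}\upperI_{g,j}\big)\Big].
\]
Define $\lowerI := \lowerb + \min_{r\in I_g} W_r r$ and $\upperI := \upperb + \max_{r\in I_g} W_r r$. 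Then for every $z\in\gZ$ and every $r\in I_g$ we have $Pz+\lowerI \le \tilde f(z,r)\le Pz+\upperI$. Therefore every possible value of $f_\text{NN}(g(z))$ lies in $Pz\oplus I$, which is the claimed soundness.

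The main obstacle is the shared-slope step. Standard CROWN produces different lower and upper slopes, and the claim depends on having a single $P$. I would justify it with the fallback above, which is sound for any admissible choice of $W$ (tightness is not claimed). A secondary point is making sure $\gZ$ is a box, so that \Theoref{theo:crown} applies. If $\gZ$ includes the time coordinate on $[0,h]$, this still holds because it is a product of intervals.
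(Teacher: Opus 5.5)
Your proof is correct and follows essentially the same route as the paper: reparameterize to $\tilde f(z,r)=f_\text{NN}(c_g+A_gz+r)$ on the box $\gZ\times I_g$, apply CROWN with a shared slope, take $P$ as the $z$-block of that slope, and absorb the $r$-block into the interval remainder by interval arithmetic over $I_g$. Your version is slightly more careful than the paper's. The paper writes the same offset $\tilde b$ on both sides of the CROWN inequality and simply asserts the shared-slope constraint, whereas you keep distinct offsets $\lowerb\le\upperb$ and give an explicit sound way to enforce a common slope, by concretizing $(\upperW-\lowerW)$ over the box.
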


\begin{remark}[Quasi-quadratic TM inputs]
For quasi-quadratic TMs, we bound the higher-order term as an interval and reduce the input to a linear TM before applying CROWN. Since the step size \(h\) is small, the resulting conservatism is typically modest in practice.
\end{remark}

CROWN therefore serves as a drop-in module for neural components within the TM-based reachability pipeline. Analytical dynamics are propagated with TM arithmetic (\Secref{subsec:ct_ana_dyn}), while neural components are certified through CROWN under the same linear-TM interface. This unified representation avoids intermediate interval conversion, preserves cross-state correlations, and maintains end-to-end differentiability.

\subsection{Closed-Loop Reachability for Continuous-Time Systems}\label{subsec:ct_nn_ctl}

\looseness-1We now consider reachability of continuous-time systems under neural feedback control. Our key idea is to compute reachable sets of the closed-loop system by \emph{alternating} between (i) certifying the neural controller using CROWN and (ii) propagating the system dynamics using TM flowpipes under zero-order hold. This yields a unified pipeline in which both controller and dynamics share the same TM-based representation.

We consider a continuous-time control system
\(\dot{x}(\tau) = f_{\mathrm{ct,dyn}}(x(\tau), u_{\mathrm{ctl}}),\)
where the control input is given by a neural controller
\(u_{\mathrm{ctl}} = f_{\mathrm{ct,ctl}}(x(i\delta))\)
and held constant over each control interval $\tau \in [i\delta,(i+1)\delta]$ (zero-order hold). The initial state satisfies $x(0)\in X_0$. For clarity, we omit additional controller inputs such as reference trajectories \((\refx,\refu)\), which can be treated as known exogenous inputs to \(\ctctl\).

To compute a certified reachable tube over a finite horizon, we decompose each control interval into two steps:

\textbf{(A) Controller certification.}
At $\tau=i\delta$, we apply the TM--CROWN interface from~\Secref{subsec:ct_nn_dyn} to the TM enclosure of \(x(i\delta)\) and obtain a certified linear-TM enclosure of the controller output
\(u_{\mathrm{ctl}} = f_{\mathrm{ct,ctl}}(x(i\delta)).\)

\textbf{(B) Dynamics propagation.}
Given the certified control enclosure, we propagate the dynamics over $\tau \in [i\delta,(i+1)\delta]$ using the TM flowpipe method from~\Secref{subsec:ct_ana_dyn}. The control input is treated as constant by augmenting the state with \(u_{\mathrm{ctl}}\) and enforcing \(\dot{u}_{\mathrm{ctl}}=0\). Each control interval is discretized into smaller steps of size $h \ll \delta$ to obtain a certified enclosure of the reachable tube and the next boundary state \(x((i+1)\delta)\).

Iterating these two steps over all control intervals yields a sound over-approximation of the reachable set of the closed-loop system. \Algref{alg:reach_ct_nn_ctl} summarizes the alternating procedure. The outer loop performs controller certification at frequency \(1/\delta\), while the inner loop applies TM flowpipe propagation with atomic step size \(h\). Reachable sets are represented as TM-valued states \(X(\tau)\), where evaluating a TM at a time point yields a box enclosure. The operation \(\mtt{Ctl_{CROWN}}()\) implements controller certification via the TM--CROWN interface, while \(\mtt{ReachStep}()\) performs one TM flowpipe step under the augmented zero-order-hold dynamics.

Closely related templates combining TM-based flowpipe propagation with CROWN-style certification have appeared in prior work~\cite{huang2022polar, CROWNReach2026}. However, these approaches typically connect the two modules through interval-only interfaces, e.g., converting TMs to intervals before applying CROWN. This discards affine dependencies captured by TM representations and can amplify conservatism over long horizons. In contrast, our method maintains a unified linear-TM representation throughout the pipeline, avoiding intermediate intervalization while preserving dependencies across states and time steps. This unified interface further enables the fully neural setting where both \(\ctdyn\) and \(\ctctl\) are neural networks.

\setlength{\floatsep}{0pt}       % space between two floats
\setlength{\textfloatsep}{0pt}   % space between floats and text
\setlength{\intextsep}{0pt}      % space around in-text floats
\begin{algorithm}[t]
    \caption{Reachability of CT NN Control Systems. }%\textcolor{brown}{Comments} are in brown.
    \label{alg:reach_ct_nn_ctl}
    {\small
    \begin{algorithmic}[1]
        \State \looseness-1\textbf{Inputs:} dynamics \(\ctdyn(x,u)\) (analytical or NN), NN controller \(\ctctl\), initial box \(\gX_0=[\lowerx_0,\upperx_0]\), total control steps \(N_\text{ctl}\), control interval \(\delta=Kh\), atomic step interval \(h\), TM-flowpipe params \(\theta_{\mathrm{fp}}
\)
        \State \textbf{Outputs:} certified reachable set \(\{\gX_j\}_{j=0}^{N} (N=N_\text{ctl}K)\) 
        \vspace{0.5em}

        \State \textcolor{brown}{\# Precompute step-local boxes and initialize TM state}
        \State \(X_\text{pre} \gets [\mtt{BuildLinearTM}(\gX_0),\;\mathbf{0}]\) \Comment{\textcolor{brown}{affine TM for \(\tilex\)}}

        \State \(\tilde{f} \gets [\ctctl,\;\mathbf{0}]\) \Comment{\textcolor{brown}{define augmented dynamics $\tilde{f}(x,u)$}}

        \vspace{0.5em}
        \State \textcolor{brown}{\# Outer loop over control updates (zero-order hold)}
        \For{$i=0,\dots,N_\text{ctl}-1$}
            \State \textcolor{brown}{\# (A) Impose NN control at the boundary using CROWN--TM}
            \State \(X_{iK} \gets X_\text{pre}(h)\) \Comment{\textcolor{brown}{TM at the control boundary $i\delta$}}
            \State \(X_{iK}  \gets \mtt{Ctl_{CROWN}}\big(X_{iK} ,\ctctl\big)\)
            \Comment{\textcolor{brown}{get affine TM \(u=\ctctl(x)\)}}

            \vspace{0.3em}
            \State \textcolor{brown}{\# (B) Inner loop: propagate augmented TM under \(\tilde{f}\) for $\delta$}
            \State  \(X_\text{in} = X_{iK}\)
            \For{$j=1,\dots,K$}
                \State \((X_\text{in},\;\gX_{iK+j}) \gets \mtt{ReachStep}\big(\tilde{f},X_\text{in};h,\theta_{\mathrm{fp}}
\big)\)
            \EndFor
            \State \(X_\text{pre} \gets X_\text{in}\) \Comment{\textcolor{brown}{TM for next control step}}
        \EndFor

        \vspace{0.3em}
        \State \Return \(\{\gX_j\}_{j=0}^{N}\)
    \end{algorithmic}
    }
\end{algorithm}

\vspace{-4pt}
\subsection{Discrete-Time Systems}\label{subsec:dt_dyn}

We now consider discrete-time systems, where reachability reduces to stepwise propagation without flowpipe construction. Consider the system
\(x_{k+1} = f_{\mathrm{dt,dyn}}(x_k, u_k),\)
with uncertain initial state $x_0 \in X_0$ and planned or exogenous inputs \(\{u_k\}_{k=0}^{H-1}\). Our goal is to compute a sequence of reachable sets \(\{X_k\}\) over a finite horizon.

At each time step, we apply the TM--CROWN interface (\Secref{subsec:ct_nn_dyn}) to certify the one-step map and obtain an affine TM enclosure of the next reachable set:
\[
X_{k+1} = f_{\mathrm{dt,dyn}}(X_k, u_k).
\]
All operations are performed within the same linear-TM representation. Algorithm~\ref{alg:reach_dt_dyn} summarizes this stepwise procedure, where each iteration applies CROWN-based certification followed by TM evaluation.

Compared to the continuous-time setting, this formulation removes inner-loop integration and yields a simpler, more efficient propagation scheme. It is particularly suitable for planning and control tasks requiring batched rollout over finite horizons, and integrates naturally with reachability-aware MPC and trajectory optimization. The same TM--CROWN interface can also support discrete-time neural feedback controllers analogously to the continuous-time case.

\begin{algorithm}[t]
    \caption{Reachability of DT Dynamics.}
    \label{alg:reach_dt_dyn}
    {\small
    \begin{algorithmic}[1]
        \State \textbf{Inputs:} DT model \(\dtdyn\), initial box \(\gX_0=[\lowerx_0,\upperx_0]\), planned inputs \(\{u_k\}_{k=0}^{H-1}\), horizon \(H\)
        \State \textbf{Outputs:} reachable sets \(\{\gX_t\}_{t=0}^{H}\)
        \vspace{0.5em}

        \State \(X_0 \gets \mtt{BuildLinearTM}(\gX_0)\)
        \For{$k=0,\dots,H-1$}
            \State \textcolor{brown}{\# Certify one-step map \(x_{k+1}=\dtdyn(x_k, u_k)\)}
            \State \(X_{k+1} \gets \mtt{CROWN}\big(\lambda x.\,\dtdyn(x,u_k),\;X_k\big)\) \Comment{\textcolor{brown}{affine bounds}}
            \State \(\gX_{k+1} \gets \mtt{EvalInt}\big(X_{k+1}\big)\)
            \Comment{\textcolor{brown}{box enclosure of next state}}
        \EndFor
        \State \Return \(\{\gX_k\}_{k=0}^{H}\)
    \end{algorithmic}
    }
\end{algorithm}
\vspace{-3pt}
\section{DiffReach-Robotics: Reachability-Aware \\ Model Learning and Planning}

Our reachability engine, DiffReach (\Secref{sec:reach}), provides differentiable reachable-set objectives that can be embedded into downstream optimization. We use this primitive to regularize the training of NN components in~\Secref{subsec:training}, and to incorporate reachable-set penalties into sampling-based MPC in~\Secref{subsec:planning}. We refer to these robotics-focused methods collectively as \textit{DiffReach-Robotics} (\Figref{fig:method}(b,c)).

\vspace{-2pt}
\subsection{Certified training of neural dynamics and controllers}\label{subsec:training}

We train neural dynamics and controllers with standard prediction or tracking losses augmented by reachability regularization: for rollouts under bounded initial uncertainty, we penalize the growth of certified reachable sets computed by the corresponding DT or CT reachability engine.

\noindent\textbf{Training DT neural dynamics.\quad}
We learn a discrete-time neural dynamics model \(\dtdyn\) that is accurate over multi-step rollouts and yields compact certified reachable sets. From random-interaction data
\(
\gD=\{(x_t^m,u_t^m)\mid t=0,\dots,T-1,\; m=0,\dots,M-1\},
\)
we form \(M'\) episodes of length \(T_h+1\) and train \(\dtdyn\) with the autoregressive prediction loss
\begin{equation}\label{eq:pred_loss_dt_dyn}
\gL_{\text{pred}}
=\textstyle \frac{1}{M'T_h}\sum_{m=0}^{M'-1}\sum_{t=0}^{T_h-1}
w_t\bigl\|x_{t+1}^m-\hx_{t+1}^m\bigr\|_2^2,
\end{equation}
where \(\hx_{t+1}^m=\dtdyn(\hx_t^m,u_t^m)\), \(\hx_0^m=x_0^m\), and time-increasing weights \(\{w_t\}\) emphasize long-horizon consistency.

\emph{Reachability-aware loss.}
Prediction accuracy alone does not control uncertainty growth, so we regularize the DT reachable tube from \(\gX_0^m=\gB_{\epsilon}(x_0^m)\) under actions \(u_{0:T_h-1}^m\). Let
\(\{\gR_t(u_{0:T_h-1}^m,\gX_0^m)\}_{t=1}^{T_h}=\{\gX_t^m\}_{t=1}^{T_h}\)
denote the certified reachable sets computed by~\Secref{subsec:dt_dyn}. We define
\begin{equation}\label{eq:reach_loss_dt_dyn}
\gL_{\text{reach}}
= \textstyle\frac{1}{M'}\sum_{m=0}^{M'-1}\log\!\bigl(1+V_{\gR,\epsilon}^m\bigr),
\end{equation}
where
\(
V_{\gR,\epsilon}^m=\sum_{t=1}^{T_h}\sum_{j=0}^{n-1}\mtt{width}(\gX_{t,j}^m)
\)
is a tube-volume proxy. The width sum and \(\log\) term stabilize gradients when volumes vary widely. The final objective is \(\gL_{\text{total}}=\gL_{\text{pred}}+\lambda\,\gL_{\text{reach}}\), with regularization weight \(\lambda\).

\emph{Curriculum learning with horizon and perturbation scheduling.\quad}
Joint optimization can become unstable for large rollout horizons \(T_h\) and perturbation radii \(\epsilon\), since early training may produce excessively large reachable sets and poorly conditioned reachability losses. To mitigate this, we use a curriculum that progressively increases \(T_h\) from one-step prediction to the target horizon using a logarithmic schedule, while simultaneously decreasing \(\epsilon\) from a larger initial radius to the desired value. This stabilizes early training while encouraging reachability-friendly dynamics over long horizons. The full procedure is given in \Algref{alg:train_dt_dyn} in Appendix~\ref{app:training}.

\noindent\textbf{Training CT neural controllers.\quad}
\looseness-1We train a CT neural controller \(\ctctl\) to track reference signals while producing compact certified reachable tubes. We collect a dataset
\(
\gD=\{(x_t^m,u_t^m,y_{\mathrm{ref},t}^m)\mid t=0,\dots,T-1,\; m=0,\dots,M-1\}
\)
at control frequency \(q_{\text{ctl}}=1/\delta\) by executing a nominal controller under randomized references, and reformat it into \(M'\) episodes of length \(T_t+1\). Using a differentiable CT dynamics model \(\ctdyn\) (analytical ODE or neural ODE) as a rollout proxy, we train \(\ctctl\) with
\begin{equation}\label{eq:track_loss_ct_ctl}
\gL_{\text{track}}
= \frac{1}{M'T_t}\sum_{m=0}^{M'-1}\sum_{t=0}^{T_t-1}
w_t\Bigl(\|u_t^m-\hu_t^m\|_2^2 + \gamma \|x_{t+1}^m-\hx_{t+1}^m\|_2^2\Bigr),
\end{equation}
where \(\hu_t^m=\ctctl(\hx_t^m,y_{\mathrm{ref},t}^m)\) is held constant over \(\tau\in[t\delta,(t+1)\delta]\), and \(\hx_{t+1}^m\) is obtained by integrating
\(
\dot{\hx}^m(\tau)=\ctdyn(\hx^m(\tau),\hu_t^m)
\)
using the JAX-based ODE solver \texttt{diffrax}~\cite{kidger2021on}. The second term encourages consistency between predicted actions and their induced state transitions.

Analogously to DT dynamics training, we regularize \(\ctctl\) using the CT NN-control reachability engine (\Secref{subsec:ct_nn_ctl}) and the same reachable-tube loss \(\gL_{\text{reach}}\) in~\eqref{eq:reach_loss_dt_dyn}, initialized from \(\gB_{\epsilon}(x_0^m)\). We use the same curriculum strategy to stabilize training and obtain controllers that are both performant and reachability-friendly. The same procedure also applies to CT dynamics training.

\subsection{Reachability-aware MPC}\label{subsec:planning}

\looseness-1Using the differentiable reachability engine and reachability-regularized neural models, we develop a reachability-aware MPC that jointly optimizes task cost and reachability penalties.

\noindent \textbf{Planning.\quad}
\looseness-1Solving~\eqref{eq:trajop_unc} with neural dynamics in real time is challenging, as
direct optimization and classical reachability-based methods are computationally expensive, while sampling-based planners such as MPPI~\citep{williams2017model} and CEM~\citep{rubinstein2013cross} must evaluate many candidate trajectories under uncertainty. We integrate the DT reachability engine (\Secref{subsec:dt_dyn}) directly into this process: for each sampled action sequence, we roll out the nominal dynamics, compute the corresponding reachable sets, and evaluate
\begin{equation}\label{eq:plan_obj}
    \hspace{-5pt}O = \textstyle\sum_{t=t_0}^{t_0+H} c(\hx_t, u_t) + C\max (0, - G\!\left(\gR_t(\vu,\gB_{\epsilon}(\hx_{t_0}))\right))
\end{equation}
where \(\max(0,-G(\cdot))\) penalizes unsafe trajectories or large reachable sets, and \(C\) controls the penalty strength. Parallelization enables efficient evaluation across many samples, while gradient-based refinement is applied only to the top candidate at the final iteration to further reduce reachable-set size.

\noindent \textbf{MPC.\quad} 
The planner is embedded in a receding-horizon MPC loop with online replanning. After each update, the first few planned actions and predicted states are passed as references to the neural controller \(\ctctl\) for low-level execution under \(\ctdyn\). Guarantees are with respect to the learned dynamics; in practice, calibrated model-error bounds (e.g., via conformal prediction) can be incorporated into the reachable sets.

\section{Results}

\looseness-1We evaluate the proposed framework along four directions: (i) runtime and tightness of the reachability engine against DT and CT baselines (Sec.~\ref{sec:results_baselines}); (ii) scalability and conservativeness on higher-dimensional systems (Sec.~\ref{sec:results_scalability}); (iii) certified training of neural dynamics and controllers (Sec.~\ref{sec:results_certified}); and (iv) robustness of reachability-aware MPC in simulation and hardware (Sec.~\ref{sec:results_planning}). All experiments were run on an RTX 4090 GPU.

\subsection{Reachability Tool Evaluations}\label{sec:results_baselines}

\begin{figure}
    \centering
    \includegraphics[width=\linewidth]{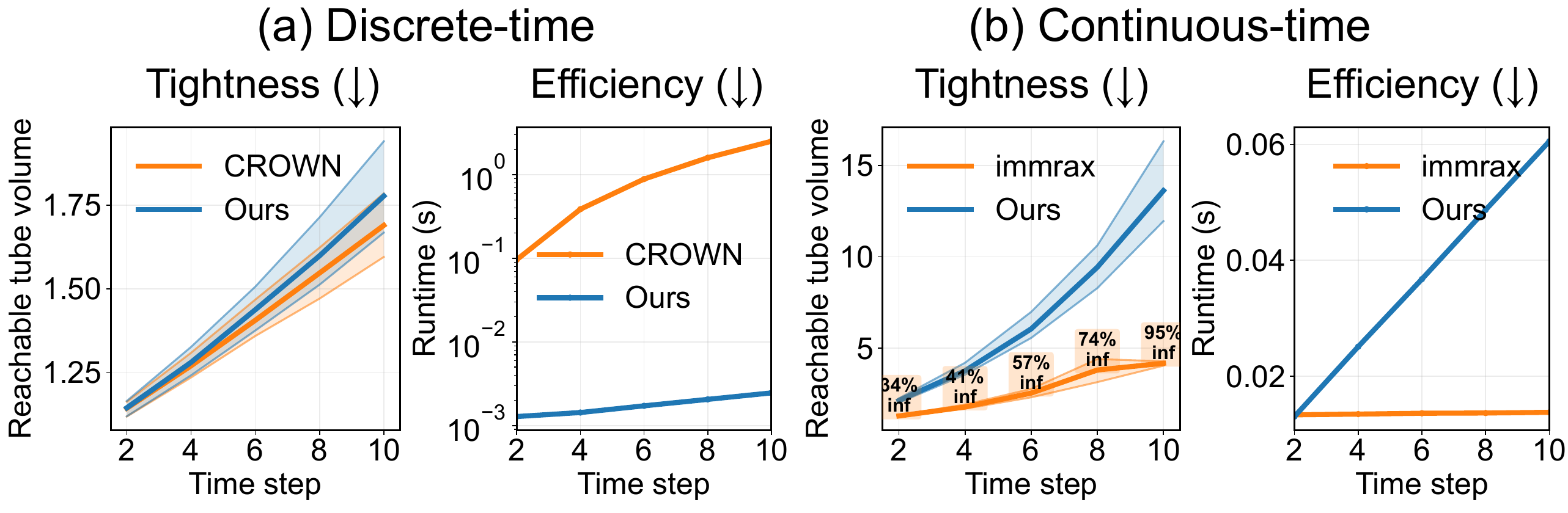}\vspace{-6pt}
    \caption{\looseness-1\textbf{Benchmarking our JAX reachability tool.} \textbf{Top}: Compared with DT CROWN reachability, our method achieves similar reachable-set volumes while being $\approx$100$\times$ faster. \textbf{Bottom}: Compared with CT immrax reachability, our method produces substantially tighter reachable sets with comparable runtime.}
    \label{fig:baseline}
\end{figure}

We benchmark the reachability engine against state-of-the-art DT (CROWN~\cite{zhang2018efficient}) and CT (immrax~\cite{harapanahalli2024immrax}) baselines with GPU support in JAX in terms of runtime and reachable-set conservativeness (Fig.~\ref{fig:baseline}), before integrating the method into downstream learning and planning.

\looseness-1We perform reachability over a horizon of 10 for 128 randomly sampled $\mathcal{X}_0$. In the DT setting (Fig.~\ref{fig:baseline}, top), we use a randomly initialized neural dynamics model $\dtdyn$ with three 96-unit hidden layers. CROWN produces slightly tighter tubes because it propagates bounds jointly across time steps, unlike our truncated propagation (Alg.~\ref{alg:train_dt_dyn}). However, our JAX implementation is $\approx$100$\times$ faster while maintaining comparable tightness.

\looseness-1For the CT setting (Fig.~\ref{fig:baseline}, bottom), we evaluate analytical quadrotor dynamics. immrax is slightly faster due to lightweight interval propagation, but produces substantially looser reachable sets and frequently returns infinite-volume tubes (up to 95\%) from numerical overflow. Even after partitioning $\mathcal{X}_0$ into 64 subsets and taking the union of the resulting reachable sets, conservativeness remains high. We therefore report average tube volume over successful trials, and separately report the failure rate (``inf'') for immrax. Overall, our method achieves comparable runtime with substantially tighter over-approximations.

Beyond forward reachability, the differentiable GPU-parallel implementation also enables practical refinement strategies for reducing conservativeness (Table~\ref{tab:refine}). Parallelized input splitting significantly reduces reachable-set volume with modest runtime increase, while gradient-based refinement further improves tightness without modifying the underlying algorithm. These results demonstrate that differentiable reachability can serve not only as a verification tool, but also as an optimization primitive for downstream learning and planning tasks.
\vspace{5pt}
\begin{table}[h]
\centering
\scriptsize
\setlength{\tabcolsep}{4pt}

\begin{minipage}[t]{0.48\linewidth}
\centering
\caption{Refinement strategies.}
\label{tab:refine}
\vspace{-8pt}
\begin{tabular}{@{}lcc@{}}
\toprule
Method & Volume & Runtime (s) \\
\midrule
Baseline & 91.94 & 0.0021 \\
Input splitting (8) & 27.56 & 0.0048 \\
Gradient (20 iters) & 18.18 & 0.1084 \\
\bottomrule
\end{tabular}
\end{minipage}
\hfill
\begin{minipage}[t]{0.48\linewidth}
\centering
\caption{Runtime comparison.}
\label{tab:multi_quad}
\vspace{-8pt}
\begin{tabular}{@{}lcc@{}}
\toprule
Method & 1 Quad (12D) & 6 Quad (72D) \\
\midrule
Sampling & 0.2601 & 2.2455 \\
Ours & 0.1399 & 0.1976 \\
\bottomrule
\end{tabular}
\end{minipage}
\end{table}
\vspace{-8pt}

\subsection{Scalability on Higher-Dimensional Systems}\label{sec:results_scalability}

We further evaluate scalability and reachable-set conservativeness on moderately higher-dimensional robotic systems beyond the base T-pushing and single-quadrotor tasks.

% \begin{wrapfigure}{r}{0.3\textwidth}
%     \centering
%     \includegraphics[width=\linewidth]{figs/rebuttal/multi_quad.pdf}
%     \vspace{-20pt}
%     \caption{Coupled 6-quadrotor system.}
%     \label{fig:multi_quad}
% \end{wrapfigure}

We first evaluate a coupled 6-quadrotor swarm with \emph{72D state} and \emph{18D action}, where quadrotors move with spring-like coupling to the swarm center (Fig.~\ref{fig:multi_quad}). We apply $0.05$ perturbations to position and velocity and perform reachability over 100 steps with $\Delta t=0.02$. Compared with dense sampling (100k samples, under-approximation), our method produces reasonable over-approximations with average $1.37\times$ expansion on position dimensions while remaining efficient. As shown in Table~\ref{tab:multi_quad}, runtime remains close to the single-quadrotor case despite the substantially larger state dimension. Although memory scales as $\mathcal{O}(n^2)$ under the affine representation, runtime scales more favorably in practice due to GPU-parallel matrix operations.

\begin{figure}[h]
    \centering
    \includegraphics[width=\linewidth]{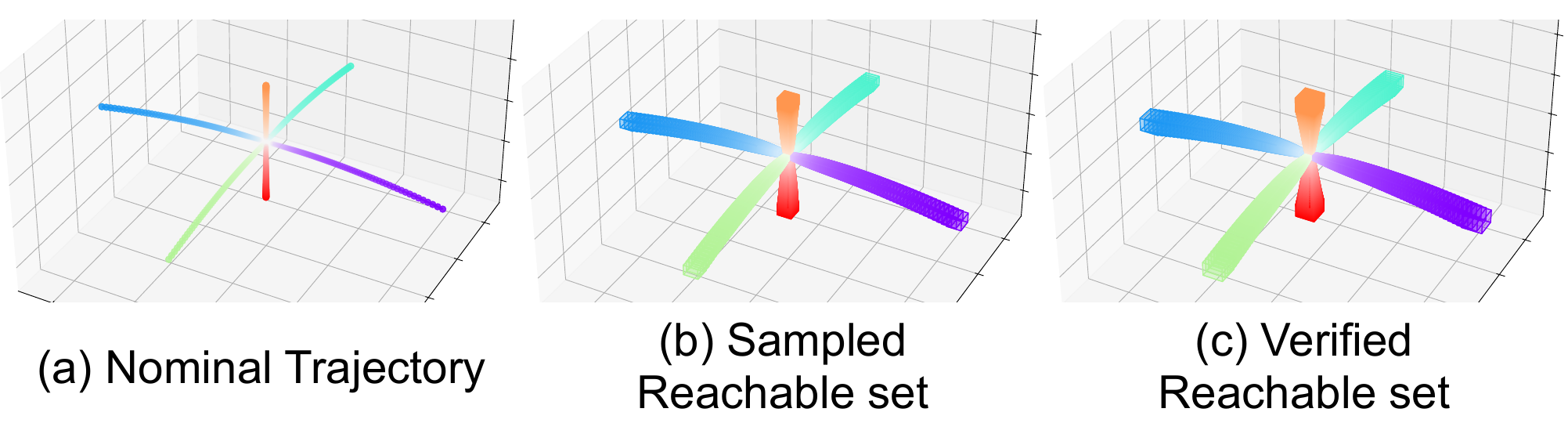}
    \vspace{-20pt}
    \caption{\textbf{Reachability on a coupled 6-quadrotor swarm}. \textbf{(a)} Nominal trajectory. Quadrotors fly toward 6 different directions while coupled by spring-like attraction to the swarm center. \textbf{(b)} Dense sampled reachable set (under-approximation) under uncertainty. \textbf{(c)} Certified reachable set computed by our method. Our method produces a reasonable certified over-approximation.}
    \label{fig:multi_quad}
\end{figure}
\vspace{8pt}

We also evaluate an acceleration-controlled 10-joint planar arm with \emph{40D state} and \emph{10D action}. We apply $\pm0.1$ rad/s perturbations to joint velocities and perform reachability over 100 steps with $\Delta t=0.01$. Table~\ref{tab:arm} compares against dense sampling (100k samples). Without splitting, reachable sets become noticeably more conservative on this articulated system. However, parallelized splitting substantially improves end-effector reachable widths while maintaining practical runtime. Overall, these results suggest that the framework can scale to moderately high-dimensional robotic systems, although tightness increasingly depends on system structure and refinement strategies such as splitting.

\vspace{5pt}
\begin{table}[h]
\centering
\scriptsize
\caption{Reachability on a 10-joint arm.}
\label{tab:arm}
\vspace{-6pt}
\begin{tabular}{lccc}
\toprule
Method & EEF X width & EEF Y width & Runtime (s) \\
\midrule
Sampling (100k) & 0.4799 & 0.6256 & 1.0944 \\
Ours (no split) & 0.8414 & 1.2264 & 0.0531 \\
Ours (64 splits) & 0.8118 & 0.9756 & 0.1186 \\
Ours (512 splits) & 0.7353 & 0.8999 & 0.3854 \\
\bottomrule
\end{tabular}
\end{table}
% \vspace{-5pt}

\subsection{Certified Training}\label{sec:results_certified}

\begin{figure*}
    \centering
    \includegraphics[width=\linewidth]{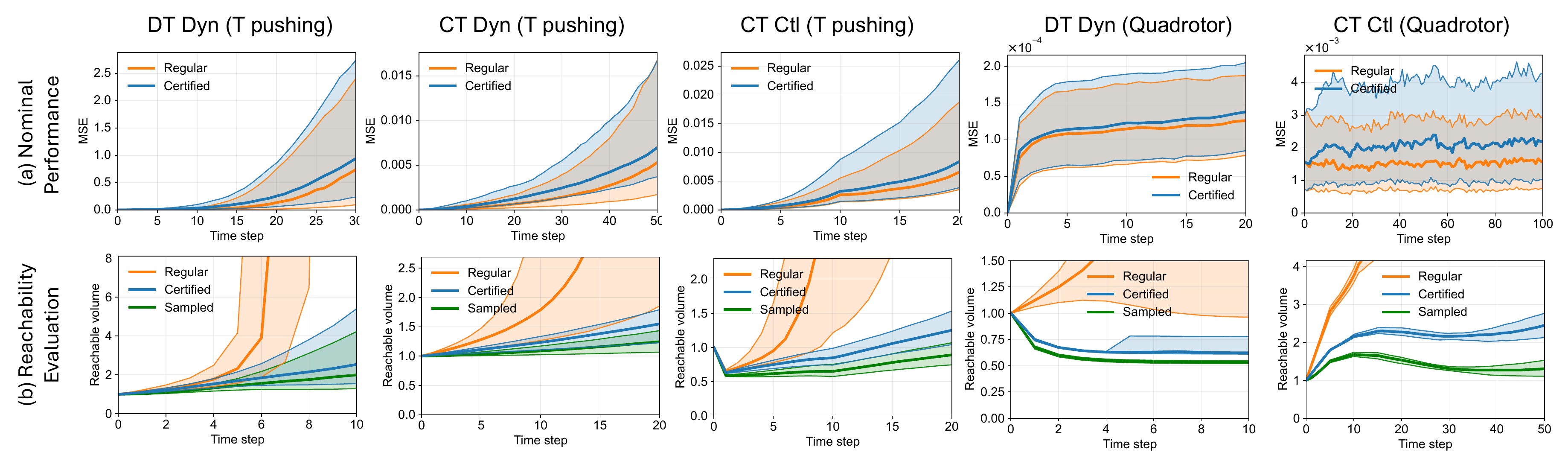}\vspace{-12pt}
    \caption{\looseness-1\textbf{Benchmarking certified training.} \textbf{Top}: Across CT/DT systems and tasks, certified training maintains nominal performance comparable to na\"ive training. \textbf{Bottom}: Certified training produces substantially tighter reachable sets that more closely match sampled lower-bound volumes.}\vspace{-15pt}
    \label{fig:certified_training}
\end{figure*}

\begin{figure}[h]
    \centering
    \includegraphics[width=\linewidth]{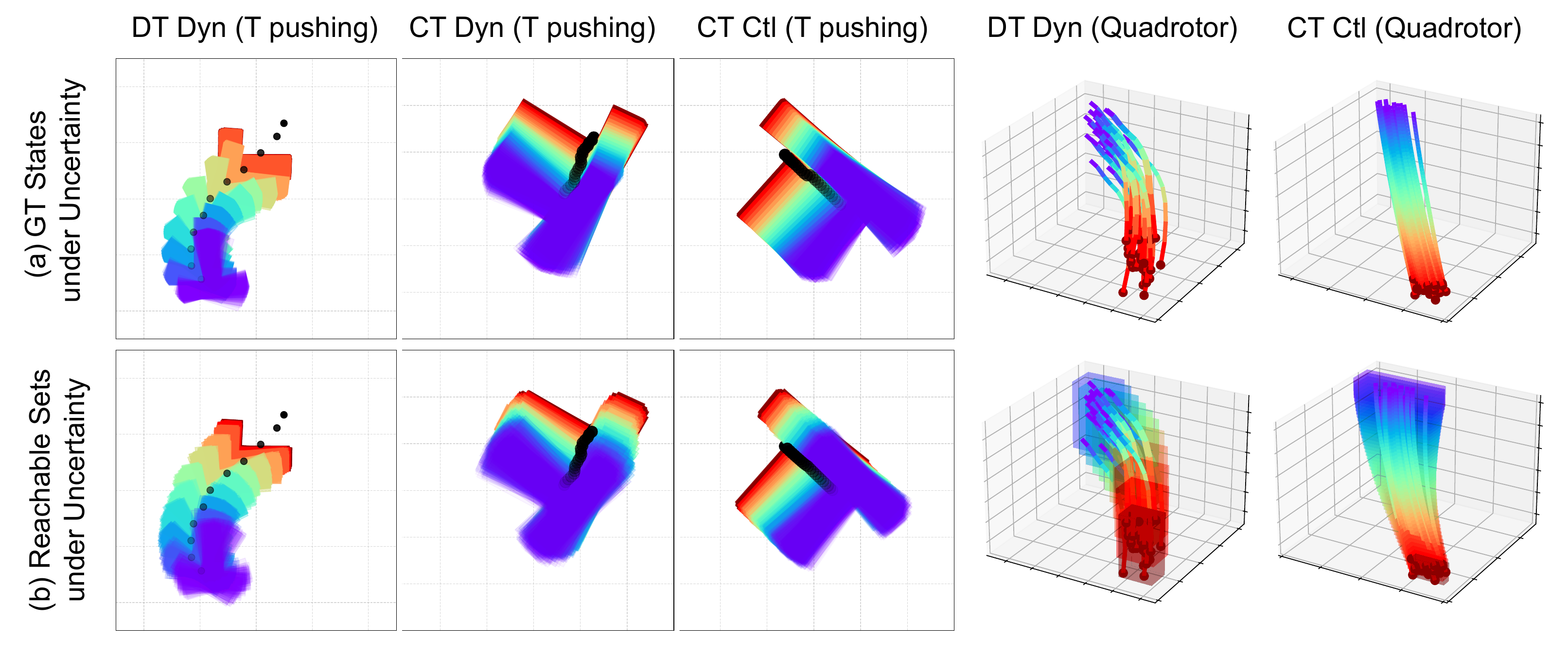}\vspace{-6pt}
    \caption{\looseness-1\textbf{Certified training visualizations.} \textbf{Top}: Rollouts from sampled initial states in $\mathcal{X}_0$ under the true dynamics or controller. \textbf{Bottom}: States sampled from the certified reachable sets closely overlap with the realizable trajectories.\vspace{9pt}}
    \label{fig:certified_training_model_viz}
\end{figure}

We evaluate certified training of CT and DT dynamics models and controllers on T-pushing and quadrotor tasks (Fig.~\ref{fig:certified_training}). Using 1,000 random initial sets $\mathcal{X}_0$ and test trajectories, we measure reachable-set tightness together with nominal prediction or tracking performance. Certified training achieves performance comparable to standard training with only slight degradation, while producing substantially tighter reachable sets. In contrast, models trained without reachability regularization yield significantly looser and less verifiable predictions.

To assess conservativeness, we estimate a lower bound on reachable-set volume (green, ``sampled") by propagating 64 sampled initial states from each $\mathcal{X}_0$ through the true dynamics and measuring the axis-aligned bounding-box volume at each step. Across tasks, certified training remains substantially closer to these sampled lower-bound trends than regular training. Fig.~\ref{fig:certified_training_model_viz} further illustrates this behavior: realized trajectories closely align with the certified reachable regions, demonstrating the fidelity of the resulting certificates.

We also study long-horizon reachable-set growth on the quadrotor task. Table~\ref{tab:long_horizon} reports the median reachable-volume ratio against dense sampling on valid trajectories with finite reachable-set volumes for DT Dyn and CT Ctl rollouts up to $5\times$ longer than those in Fig.~\ref{fig:certified_training}. DT reachability remains relatively stable under stepwise propagation, while CT conservativeness grows substantially over long horizons due to trajectories with infinite-volume reachable sets. Parallelized splitting on critical roll-pitch-yaw states improves tightness and reduces the fraction of unstable CT trajectories from 42\% to 24\%. Since the reported ratios are computed only over valid trajectories, intermediate horizons may not monotonically reflect this improvement. Nevertheless, splitting substantially improves long-horizon behavior, reducing the $5\times$ horizon ratio from $18.6154$ to $6.2407$. These results suggest that although long-horizon CT reachability remains challenging, splitting provides a practical mechanism for mitigating conservativeness growth.

\vspace{5pt}
\begin{table}[h]
\centering
\scriptsize
\caption{Tightness over long horizons on quadrotor.}
\label{tab:long_horizon}
\vspace{-6pt}
\begin{tabular}{lccccc}
\toprule
Horizon & $1 \times$ & $2 \times$ & $3 \times$ & $4 \times$ & $5 \times$ \\
\midrule
DT Dyn (no split) & 1.3221 & 1.3227 & 1.8782 & 1.8872 & 1.8898 \\
CT Ctl (no split) & 1.8648 & 2.1591 & 3.8261 & 3.8737 & 18.6154 \\
CT Ctl (8 splits, rpy) & 1.9171 & 2.9817 & 3.8261 & 3.0396 & 6.2407 \\
\bottomrule\vspace{-1pt}
\end{tabular}
\end{table}
\vspace{-5pt}

% Overall, certified training substantially improves verifiability while maintaining strong nominal performance across both CT and DT systems.

\subsection{Reachability-Aware MPC}\label{sec:results_planning}

\begin{figure}[h]
    \centering
    \includegraphics[width=\linewidth]{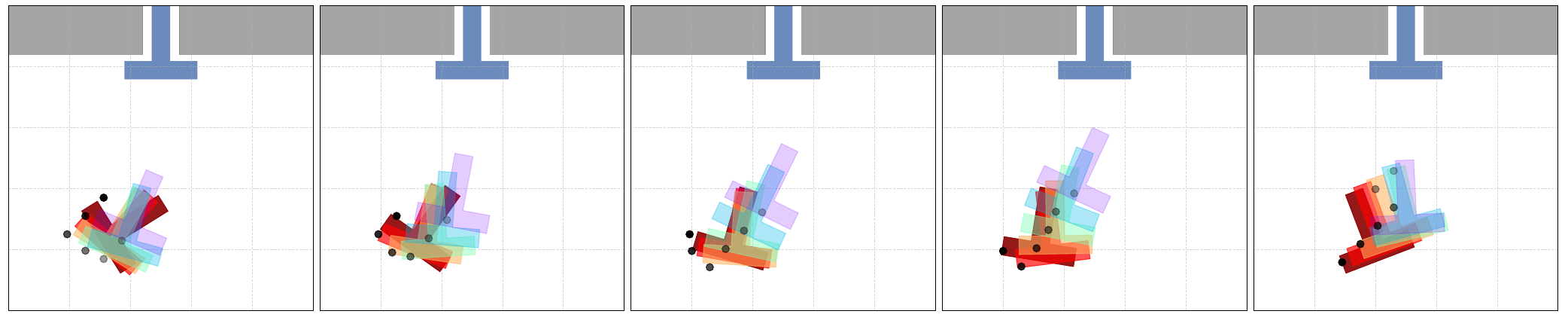}\vspace{-6pt}
    \caption{\looseness-1\textbf{Failure case of vanilla MPC with non-certified dynamics.} The planner places the pusher too close to the T-shaped object, leading to unintended contact and failure to reach the goal.}
    \label{fig:planning_failure}
\end{figure}

\begin{figure}[t]
    \centering
    \includegraphics[width=\linewidth]{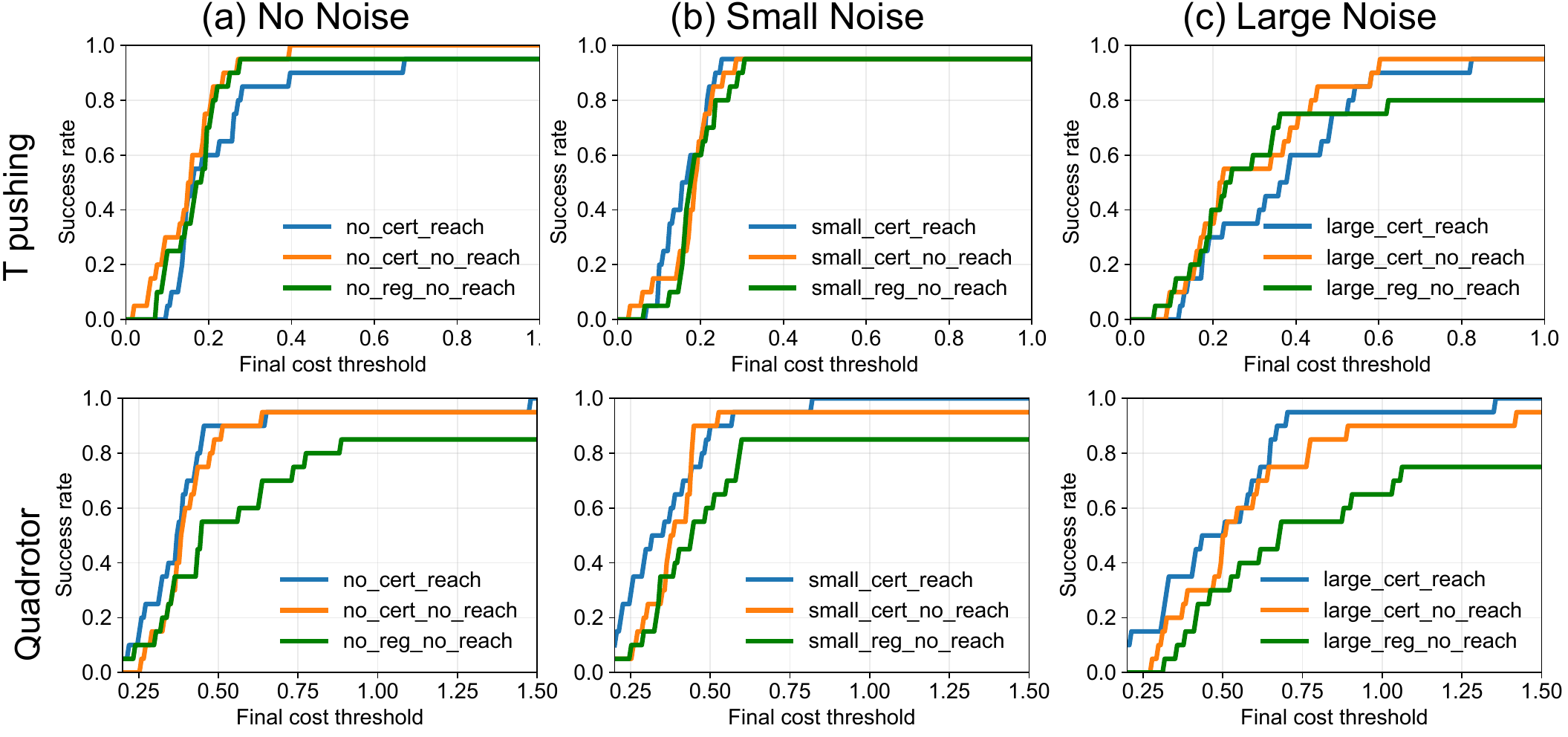}\vspace{-6pt}
    \caption{\looseness-1\textbf{Reachability-aware planning and MPC.} MPC performance under \textbf{(a)} no, \textbf{(b)} small, and \textbf{(c)} large disturbances. We compare \textbf{1.} reachability-aware planning with certified models (blue), \textbf{2.} vanilla planning with certified models (orange), and \textbf{3.} vanilla planning with regular models (green). Performance is similar under small disturbances, while under large disturbances the regular-model baseline degrades substantially, especially for quadrotor control.}\vspace{2pt}
    \label{fig:reachability_planning}
\end{figure}

\looseness-1Finally, we evaluate reachability-aware MPC under dynamics and control disturbances. For T-pushing, we apply bounded perturbations of $0.05$ and $0.015$ at the planning and low-level control layers; for quadrotor control, the corresponding bounds are $0.3$ and $0.1$. In Fig.~\ref{fig:planning}, MPC plans toward goal states while enforcing state constraints with tight reachable sets under uncertainty. The T-shaped object must avoid the gray unsafe region, while the quadrotor must avoid a spherical obstacle.

\looseness-1Fig.~\ref{fig:planning} visualizes MPC trajectories generated using the learned dynamics model and~\eqref{eq:trajop_unc}. For T-pushing, we sample configurations from the reachable sets to illustrate uncertainty propagation; for quadrotor navigation, we visualize reachable tubes projected into position space. Across both tasks, the planner satisfies constraints while remaining robust to bounded disturbances.

We further compare against vanilla MPC without reachability constraints using non-certified dynamics (Fig.~\ref{fig:planning_failure}). Here, the planner positions the pusher too close to the T-shaped object, and unintended contact perturbs the trajectory beyond recovery. This example highlights the importance of incorporating reachable-set information during planning rather than relying only on nominal predictions.

\looseness-1Quantitative comparisons are reported in Fig.~\ref{fig:reachability_planning}. Under no or small disturbances, all methods achieve similar performance. Under larger disturbances, however, reachability-aware planning with certified models consistently achieves higher success rates, while vanilla MPC with regular models degrades substantially, especially on the quadrotor task. These results suggest that reachability-aware MPC improves robustness without excessive conservativeness in nominal settings.

\looseness-1We also demonstrate a hardware rollout of reachability-aware MPC on the T-pushing task using a Kuka iiwa platform (Fig.~\ref{fig:planning}, middle row). The planner runs online at approximately $10$\,Hz ($5$\,Hz with gradient-based refinement), enabling real-time planning and smooth execution. While the current hardware evaluation is qualitative, it demonstrates the practical feasibility of integrating differentiable reachability into online robotic control.

\vspace{-3pt}
\section{Conclusion} 
\label{sec:conclusion}

We introduce a parallel, differentiable reachability framework for closed-loop robotic systems with NN dynamics and controllers. By combining Taylor-model flowpipes with CROWN-style linear bound propagation in JAX, we obtain a GPU-accelerated reachability primitive for robot learning and planning. We further develop certified training and reachability-aware MPC, enabling verification-aware optimization of models and actions. Experiments across manipulation and quadrotor tasks demonstrate practical online planning with certified reachable-set over-approximations under uncertainty.

Our current framework primarily targets smooth or learned dynamics and still faces challenges from long-horizon conservatism and hybrid or contact-rich dynamics. We hope this work motivates future research on scalable differentiable reachability and tighter integration between formal verification, robot learning, and control.

\section*{Acknowledgments}

This work was supported in part by a Mathworks Microgrant. We would like to thank the CROWN-Reach team~\cite{CROWNReach2026}, especially Xiangru Zhong and Prof. Huan Zhang, for insightful discussions. We also thank Jeffrey Fang and Wei-Chen Li for their assistance with hardware setup.

%% Use plainnat to work nicely with natbib. 

\bibliographystyle{plainnat}
\bibliography{references}

@article{manzanas_lopez2024arch,
  title={Arch-comp24 category report: Artificial intelligence and neural network control systems (ainncs) for continuous and hybrid systems plants},
  author={Manzanas\_Lopez, Diego and Althoff, Matthias and Benet, Luis and Blab, Clemens and Forets, Marcelo and Jia, Yuhao and Johnson, Taylor T and Kranzl, Manuel and Ladner, Tobias and Linauer, Lukas and others},
  year={2024},
  publisher={Easychair}
}

@inproceedings{chen2013flow,
  title={Flow*: An analyzer for non-linear hybrid systems},
  author={Chen, Xin and {\'A}brah{\'a}m, Erika and Sankaranarayanan, Sriram},
  booktitle={International Conference on Computer Aided Verification},
  pages={258--263},
  year={2013},
  organization={Springer}
}

@inproceedings{chen2012taylor,
  title={Taylor model flowpipe construction for non-linear hybrid systems},
  author={Chen, Xin and Abraham, Erika and Sankaranarayanan, Sriram},
  booktitle={2012 IEEE 33rd Real-Time Systems Symposium},
  pages={183--192},
  year={2012},
  organization={IEEE}
}

@inproceedings{chen2016decomposed,
  title={Decomposed reachability analysis for nonlinear systems},
  author={Chen, Xin and Sankaranarayanan, Sriram},
  booktitle={2016 IEEE Real-Time Systems Symposium (RTSS)},
  pages={13--24},
  year={2016},
  organization={IEEE}
}

@phdthesis{chen2015reachability,
  title={Reachability analysis of non-linear hybrid systems using taylor models},
  author={Chen, Xin},
  year={2015},
  school={Fachgruppe Informatik, RWTH Aachen University}
}

@article{berz1998verified,
  title={Verified integration of ODEs and flows using differential algebraic methods on high-order Taylor models},
  author={Berz, Martin and Makino, Kyoko},
  journal={Reliable computing},
  volume={4},
  number={4},
  pages={361--369},
  year={1998},
  publisher={Springer}
}

@inproceedings{huang2022polar,
  title={POLAR: A polynomial arithmetic framework for verifying neural-network controlled systems},
  author={Huang, Chao and Fan, Jiameng and Chen, Xin and Li, Wenchao and Zhu, Qi},
  booktitle={International Symposium on Automated Technology for Verification and Analysis},
  pages={414--430},
  year={2022},
  organization={Springer}
}

@article{wang2023polar,
  title={Polar-express: Efficient and precise formal reachability analysis of neural-network controlled systems},
  author={Wang, Yixuan and Zhou, Weichao and Fan, Jiameng and Wang, Zhilu and Li, Jiajun and Chen, Xin and Huang, Chao and Li, Wenchao and Zhu, Qi},
  journal={IEEE Transactions on Computer-Aided Design of Integrated Circuits and Systems},
  volume={43},
  number={3},
  pages={994--1007},
  year={2023},
  publisher={IEEE}
}

@inproceedings{Althoff2015ARCH,
author			= {Matthias Althoff},
title			= {An Introduction to {CORA} 2015},
year			= {2015},
month 			= {December},
booktitle		= {Proc. of the 1st and 2nd Workshop on Applied Verification for Continuous and Hybrid Systems},
doi			= {10.29007/zbkv},
url			= {https://easychair.org/publications/paper/xMm},
publisher 		= {EasyChair},
pages			= {120-151}
}

@inproceedings{bogomolov2019juliareach,
  title={JuliaReach: a toolbox for set-based reachability},
  author={Bogomolov, Sergiy and Forets, Marcelo and Frehse, Goran and Potomkin, Kostiantyn and Schilling, Christian},
  booktitle={Proceedings of the 22nd ACM International Conference on Hybrid Systems: Computation and Control},
  pages={39--44},
  year={2019}
}

@inproceedings{tran2020nnv,
  title={NNV: the neural network verification tool for deep neural networks and learning-enabled cyber-physical systems},
  author={Tran, Hoang-Dung and Yang, Xiaodong and Manzanas Lopez, Diego and Musau, Patrick and Nguyen, Luan Viet and Xiang, Weiming and Bak, Stanley and Johnson, Taylor T},
  booktitle={International conference on computer aided verification},
  pages={3--17},
  year={2020},
  organization={Springer}
}

@inproceedings{lopez2023nnv,
  title={NNV 2.0: The neural network verification tool},
  author={Lopez, Diego Manzanas and Choi, Sung Woo and Tran, Hoang-Dung and Johnson, Taylor T},
  booktitle={International Conference on Computer Aided Verification},
  pages={397--412},
  year={2023},
  organization={Springer}
}

@misc{CROWNReach2026,
  title        = {CROWN-Reach: A Reachability Analysis Tool for Neural Network Controlled Systems},
  author       = {Zhong, Xiangru and Jia, Yuhao and Zhang, Huan},
  year         = {2024},
  howpublished = {\url{https://github.com/Verified-Intelligence/CROWN-Reach}},
  note         = {GitHub repository, accessed January 30, 2026},
}

@inproceedings{gruenbacher2022gotube,
  title={Gotube: Scalable statistical verification of continuous-depth models},
  author={Gruenbacher, Sophie A and Lechner, Mathias and Hasani, Ramin and Rus, Daniela and Henzinger, Thomas A and Smolka, Scott A and Grosu, Radu},
  booktitle={Proceedings of the AAAI Conference on Artificial Intelligence},
  volume={36},
  number={6},
  pages={6755--6764},
  year={2022}
}

@article{harapanahalli2024immrax,
  title={immrax: A parallelizable and differentiable toolbox for interval analysis and mixed monotone reachability in jax},
  author={Harapanahalli, Akash and Jafarpour, Saber and Coogan, Samuel},
  journal={IFAC-PapersOnLine},
  volume={58},
  number={11},
  pages={75--80},
  year={2024},
  publisher={Elsevier}
}

@phdthesis{kidger2021on,
    title={{O}n {N}eural {D}ifferential {E}quations},
    author={Patrick Kidger},
    year={2021},
    school={University of Oxford},
}

@software{jax2018github,
  author = {James Bradbury and Roy Frostig and Peter Hawkins and Matthew James Johnson and Chris Leary and Dougal Maclaurin and George Necula and Adam Paszke and Jake Vander{P}las and Skye Wanderman-{M}ilne and Qiao Zhang},
  title = {{JAX}: composable transformations of {P}ython+{N}um{P}y programs},
  url = {http://github.com/jax-ml/jax},
  version = {0.3.13},
  year = {2018},
}

@inproceedings{salman2019convex,
  title = {A Convex Relaxation Barrier to Tight Robustness Verification of Neural Networks},
  author = {Salman, Hadi and Yang, Greg and Zhang, Huan and Hsieh, Cho-Jui and Zhang, Pengchuan},
  booktitle = {Advances in Neural Information Processing Systems (NeurIPS)},
  year = {2019}
}

@inproceedings{zhang2018efficient,
  title={Efficient neural network robustness certification with general activation functions},
  author={Zhang, Huan and Weng, Tsui-Wei and Chen, Pin-Yu and Hsieh, Cho-Jui and Daniel, Luca},
  booktitle={Advances in Neural Information Processing Systems (NeurIPS)},
  year={2018}
}

@inproceedings{herbert2017fastrack,
  title={FaSTrack: A modular framework for fast and guaranteed safe motion planning},
  author={Herbert, Sylvia L and Chen, Mo and Han, SooJean and Bansal, Somil and Fisac, Jaime F and Tomlin, Claire J},
  booktitle={2017 IEEE 56th Annual Conference on Decision and Control (CDC)},
  pages={1517--1522},
  year={2017},
  organization={IEEE}
}

@article{zhang2023exact,
  title={Exact verification of relu neural control barrier functions},
  author={Zhang, Hongchao and Wu, Junlin and Vorobeychik, Yevgeniy and Clark, Andrew},
  journal={Advances in neural information processing systems},
  volume={36},
  pages={5685--5705},
  year={2023}
}

@article{majumdar2017funnel,
  title={Funnel libraries for real-time robust feedback motion planning},
  author={Majumdar, Anirudha and Tedrake, Russ},
  journal={The International Journal of Robotics Research},
  volume={36},
  number={8},
  pages={947--982},
  year={2017},
  publisher={SAGE Publications Sage UK: London, England}
}

@inproceedings{chou2021model,
  title={Model error propagation via learned contraction metrics for safe feedback motion planning of unknown systems},
  author={Chou, Glen and Ozay, Necmiye and Berenson, Dmitry},
  booktitle={2021 60th IEEE Conference on Decision and Control (CDC)},
  pages={3576--3583},
  year={2021},
  organization={IEEE}
}

@article{knuth2021planning,
  title={Planning with learned dynamics: Probabilistic guarantees on safety and reachability via lipschitz constants},
  author={Knuth, Craig and Chou, Glen and Ozay, Necmiye and Berenson, Dmitry},
  journal={IEEE Robotics and Automation Letters},
  volume={6},
  number={3},
  pages={5129--5136},
  year={2021},
  publisher={IEEE}
}

@article{singh2023robust,
  title={Robust feedback motion planning via contraction theory},
  author={Singh, Sumeet and Landry, Benoit and Majumdar, Anirudha and Slotine, Jean-Jacques and Pavone, Marco},
  journal={The International Journal of Robotics Research},
  volume={42},
  number={9},
  pages={655--688},
  year={2023},
  publisher={SAGE Publications Sage UK: London, England}
}

@article{knuth2022statistical,
  title={Statistical safety and robustness guarantees for feedback motion planning of unknown underactuated stochastic systems},
  author={Knuth, Craig and Chou, Glen and Reese, Jamie and Moore, Joe},
  journal={arXiv preprint arXiv:2212.06874},
  year={2022}
}

@inproceedings{chou2022safe,
  title={Safe output feedback motion planning from images via learned perception modules and contraction theory},
  author={Chou, Glen and Ozay, Necmiye and Berenson, Dmitry},
  booktitle={International Workshop on the Algorithmic Foundations of Robotics},
  pages={349--367},
  year={2022},
  organization={Springer}
}

@inproceedings{liu2023safe,
  title={Safe control under input limits with neural control barrier functions},
  author={Liu, Simin and Liu, Changliu and Dolan, John},
  booktitle={Conference on Robot Learning},
  pages={1970--1980},
  year={2023}
}

@inproceedings{yin2020optimization,
  title={Optimization based planner--tracker design for safety guarantees},
  author={Yin, He and Bujarbaruah, Monimoy and Arcak, Murat and Packard, Andrew},
  booktitle={2020 American Control Conference (ACC)},
  pages={5194--5200},
  year={2020},
  organization={IEEE}
}

@article{chi2025diffusion,
  title={Diffusion policy: Visuomotor policy learning via action diffusion},
  author={Chi, Cheng and Xu, Zhenjia and Feng, Siyuan and Cousineau, Eric and Du, Yilun and Burchfiel, Benjamin and Tedrake, Russ and Song, Shuran},
  journal={The International Journal of Robotics Research},
  volume={44},
  number={10-11},
  pages={1684--1704},
  year={2025},
  publisher={Sage Publications Sage UK: London, England}
}

@article{levine2016end,
  title={End-to-end training of deep visuomotor policies},
  author={Levine, Sergey and Finn, Chelsea and Darrell, Trevor and Abbeel, Pieter},
  journal={Journal of Machine Learning Research},
  volume={17},
  number={39},
  pages={1--40},
  year={2016}
}

@inproceedings{so2024train,
  title={How to train your neural control barrier function: Learning safety filters for complex input-constrained systems},
  author={So, Oswin and Serlin, Zachary and Mann, Makai and Gonzales, Jake and Rutledge, Kwesi and Roy, Nicholas and Fan, Chuchu},
  booktitle={2024 IEEE International Conference on Robotics and Automation (ICRA)},
  pages={11532--11539},
  year={2024},
  organization={IEEE}
}

@inproceedings{singh2018robust,
  title={Robust tracking with model mismatch for fast and safe planning: an sos optimization approach},
  author={Singh, Sumeet and Chen, Mo and Herbert, Sylvia L and Tomlin, Claire J and Pavone, Marco},
  booktitle={International Workshop on the Algorithmic Foundations of Robotics},
  pages={545--564},
  year={2018},
  organization={Springer}
}

@article{xu2020automatic,
  title={Automatic Perturbation Analysis for Scalable Certified Robustness and Beyond},
  author={Xu, Kaidi and Shi, Zhouxing and Zhang, Huan and Wang, Yihan and Chang, Kai-Wei and Huang, Minlie and Kailkhura, Bhavya and Lin, Xue and Hsieh, Cho-Jui},
  journal={Advances in Neural Information Processing Systems (NeurIPS)},
  year={2020}
}

@article{xu2020fast,
  title={Fast and complete: Enabling complete neural network verification with rapid and massively parallel incomplete verifiers},
  author={Xu, Kaidi and Zhang, Huan and Wang, Shiqi and Wang, Yihan and Jana, Suman and Lin, Xue and Hsieh, Cho-Jui},
  journal={International Conference on Learning Representations (ICLR)},
  year={2021}
}

@inproceedings{wang2021beta,
  title={Beta-CROWN: Efficient Bound Propagation with Per-neuron Split Constraints for Complete and Incomplete Neural Network Robustness Verification},
  author={Wang, Shiqi and Zhang, Huan and Xu, Kaidi and Jana, Suman and Lin, Xue and Hsieh, Cho-Jui and Kolter, Zico},
  booktitle={Advances in Neural Information Processing Systems (NeurIPS)},
  year={2021},
}

@article{zhang2022gcpcrown,
  title={General Cutting Planes for Bound-Propagation-Based Neural Network Verification},
  author={Zhang, Huan and Wang, Shiqi and Xu, Kaidi and Li, Linyi and Li, Bo and Jana, Suman and Hsieh, Cho-Jui and Kolter, J Zico},
  journal={Advances in Neural Information Processing Systems},
  year={2022}
}

@misc{tjeng2019evaluating,
      title={Evaluating Robustness of Neural Networks with Mixed Integer Programming}, 
      author={Vincent Tjeng and Kai Xiao and Russ Tedrake},
      year={2019},
      eprint={1711.07356},
      archivePrefix={arXiv},
      primaryClass={cs.LG}
}

@article{liu2021algorithms,
  title={Algorithms for verifying deep neural networks},
  author={Liu, Changliu and Arnon, Tomer and Lazarus, Christopher and Strong, Christopher and Barrett, Clark and Kochenderfer, Mykel J.},
  journal={Foundations and Trends\textregistered{} in Optimization},
  volume={4},
  number={3-4},
  year={2021}
}

@inproceedings{lu2020neural,
  title={Neural network branching for neural network verification},
  author={Lu, Jingyue and M. Pawan Kumar.},
  booktitle={International Conference on Learning Representations (ICLR)},
  year={2020}
}

@inproceedings{wong2018provable,
  title={Provable defenses against adversarial examples via the convex outer adversarial polytope},
  author={Eric Wong, J. Zico Kolter},
  booktitle={International Conference on Machine Learning (ICML)},
  year={2018}
}

@InProceedings{pmlr-v162-zhang22ae,
  title = 	 {A Branch and Bound Framework for Stronger Adversarial Attacks of {R}e{LU} Networks},
  author =       {Zhang, Huan and Wang, Shiqi and Xu, Kaidi and Wang, Yihan and Jana, Suman and Hsieh, Cho-Jui and Kolter, Zico},
  booktitle = 	 {International Conference on Machine Learning (ICML)},
  pages = 	 {26591--26604},
  year = 	 {2022},
  organization =    {PMLR},
}

@inproceedings{bunel2018unified,
  title={A unified view of piecewise linear neural network verification},
  author={Rudy Bunel and Ilker Turkaslan and Philip H. S. Torr and Pushmeet Kohli and M. Pawan Kumar},
  booktitle={Advances in Neural Information Processing Systems (NeurIPS)},
  year={2018}
}

@misc{blomqvist2022pymunk,
  author = {Blomqvist, Victor},
  title = {Pymunk},
  howpublished = {\url{https://pymunk.org}},
  month = nov,
  year = {2022},
  version = {6.4.0}
}

@article{gowal2018effectiveness,
  title={On the effectiveness of interval bound propagation for training verifiably robust models},
  author={Gowal, Sven and Dvijotham, Krishnamurthy and Stanforth, Robert and Bunel, Rudy and Qin, Chongli and Uesato, Jonathan and Mann, Timothy and Kohli, Pushmeet},
  journal={Proceedings of the IEEE International Conference on Computer Vision (ICCV)},
  year={2019}
}

@inproceedings{wang2018efficient,
  title={Efficient formal safety analysis of neural networks},
  author={Wang, Shiqi and Pei, Kexin and Whitehouse, Justin and Yang, Junfeng and Jana, Suman},
  booktitle={Advances in Neural Information Processing Systems (NeurIPS)},
  year={2018}
}

@article{singh2019abstract,
  title={An abstract domain for certifying neural networks},
  author={Singh, Gagandeep and Gehr, Timon and P{\"u}schel, Markus and Vechev, Martin},
  journal={Proceedings of the ACM on Programming Languages (POPL)},
  year={2019}
}

@article{brix2024fifth,
  title={The fifth international verification of neural networks competition (vnn-comp 2024): Summary and results},
  author={Brix, Christopher and Bak, Stanley and Johnson, Taylor T and Wu, Haoze},
  journal={arXiv preprint arXiv:2412.19985},
  year={2024}
}

@article{kaulen20256th,
  title={The 6th International Verification of Neural Networks Competition (VNN-COMP 2025): Summary and Results},
  author={Kaulen, Konstantin and Ladner, Tobias and Bak, Stanley and Brix, Christopher and Duong, Hai and Flinkow, Thomas and Johnson, Taylor T and Koller, Lukas and Manino, Edoardo and Nguyen, ThanhVu H and others},
  journal={arXiv preprint arXiv:2512.19007},
  year={2025}
}

@article{srinivasan2026safety,
  title={Safety beyond the training data: Robust out-of-distribution mpc via conformalized system level synthesis},
  author={Srinivasan, Anutam and Leeman, Antoine and Chou, Glen},
  journal={arXiv preprint arXiv:2602.12047},
  year={2026}
}

@article{shen2024bab,
  title={Bab-nd: Long-horizon motion planning with branch-and-bound and neural dynamics},
  author={Shen, Keyi and Yu, Jiangwei and Barreiros, Jose and Zhang, Huan and Li, Yunzhu},
  journal={arXiv preprint arXiv:2412.09584},
  year={2024}
}

@misc{shi2026certifiedtrainingbranchandboundlyapunovstable,
      title={Certified Training with Branch-and-Bound for Lyapunov-stable Neural Control}, 
      author={Zhouxing Shi and Haoyu Li and Cho-Jui Hsieh and Huan Zhang},
      year={2026},
      eprint={2411.18235},
      archivePrefix={arXiv},
      primaryClass={cs.LG},
      url={https://arxiv.org/abs/2411.18235}, 
}

@misc{wu2024verifiedsafereinforcementlearning,
      title={Verified Safe Reinforcement Learning for Neural Network Dynamic Models}, 
      author={Junlin Wu and Huan Zhang and Yevgeniy Vorobeychik},
      year={2024},
      eprint={2405.15994},
      archivePrefix={arXiv},
      primaryClass={cs.LG},
      url={https://arxiv.org/abs/2405.15994}, 
}

@article{yang2024lyapunov,
  title={Lyapunov-stable neural control for state and output feedback: A novel formulation},
  author={Yang, Lujie and Dai, Hongkai and Shi, Zhouxing and Hsieh, Cho-Jui and Tedrake, Russ and Zhang, Huan},
  journal={arXiv preprint arXiv:2404.07956},
  year={2024}
}

@article{li2025neural,
  title={Neural Contraction Metrics with Formal Guarantees for Discrete-Time Nonlinear Dynamical Systems},
  author={Li, Haoyu and Zhong, Xiangru and Hu, Bin and Zhang, Huan},
  journal={arXiv preprint arXiv:2504.17102},
  year={2025}
}

@article{li2025two,
  title={Two-Stage Learning of Stabilizing Neural Controllers via Zubov Sampling and Iterative Domain Expansion},
  author={Li, Haoyu and Zhong, Xiangru and Hu, Bin and Zhang, Huan},
  journal={arXiv preprint arXiv:2506.01356},
  year={2025}
}

@article{zhang2019towards,
  title={Towards stable and efficient training of verifiably robust neural networks},
  author={Zhang, Huan and Chen, Hongge and Xiao, Chaowei and Gowal, Sven and Stanforth, Robert and Li, Bo and Boning, Duane and Hsieh, Cho-Jui},
  journal={arXiv preprint arXiv:1906.06316},
  year={2019}
}

@misc{shi2023robocook,
      title={RoboCook: Long-Horizon Elasto-Plastic Object Manipulation with Diverse Tools}, 
      author={Haochen Shi and Huazhe Xu and Samuel Clarke and Yunzhu Li and Jiajun Wu},
      year={2023},
      eprint={2306.14447},
      archivePrefix={arXiv},
      primaryClass={cs.RO}
}

@article{huang2021training,
  title={Training certifiably robust neural networks with efficient local lipschitz bounds},
  author={Huang, Yujia and Zhang, Huan and Shi, Yuanyuan and Kolter, J Zico and Anandkumar, Anima},
  journal={Advances in Neural Information Processing Systems},
  volume={34},
  pages={22745--22757},
  year={2021}
}

@inproceedings{Wang-RSS-23,
	address = {Daegu, Republic of Korea},
	author = {Yixuan Wang AND Yunzhu Li AND Katherine Driggs-Campbell AND Li Fei-Fei AND Jiajun Wu},
	booktitle = {Proceedings of Robotics: Science and Systems},
	doi = {10.15607/RSS.2023.XIX.047},
	month = {July},
	title = {{Dynamic-Resolution Model Learning for Object Pile Manipulation}},
	year = {2023}}

@article{finn2016unsupervised,
  title={Unsupervised learning for physical interaction through video prediction},
  author={Finn, Chelsea and Goodfellow, Ian and Levine, Sergey},
  journal={arXiv preprint arXiv:1605.07157},
  year={2016}
}

@article{ebert2018visual,
  title={Visual foresight: Model-based deep reinforcement learning for vision-based robotic control},
  author={Ebert, Frederik and Finn, Chelsea and Dasari, Sudeep and Xie, Annie and Lee, Alex and Levine, Sergey},
  journal={arXiv preprint arXiv:1812.00568},
  year={2018}
}

@article{hafner2019dream,
  title={Dream to control: Learning behaviors by latent imagination},
  author={Hafner, Danijar and Lillicrap, Timothy and Ba, Jimmy and Norouzi, Mohammad},
  journal={arXiv preprint arXiv:1912.01603},
  year={2019}
}

@inproceedings{wu2023daydreamer,
  title={Daydreamer: World models for physical robot learning},
  author={Wu, Philipp and Escontrela, Alejandro and Hafner, Danijar and Abbeel, Pieter and Goldberg, Ken},
  booktitle={Conference on Robot Learning},
  pages={2226--2240},
  year={2023},
  organization={PMLR}
}

@article{shi2022robocraft,
  title={RoboCraft: Learning to See, Simulate, and Shape Elasto-Plastic Objects with Graph Networks},
  author={Shi, Haochen and Xu, Huazhe and Huang, Zhiao and Li, Yunzhu and Wu, Jiajun},
  journal={arXiv preprint arXiv:2205.02909},
  year={2022}
}

@article{manuelli2020keypoints,
  title={Keypoints into the Future: Self-Supervised Correspondence in Model-Based Reinforcement Learning},
  author={Manuelli, Lucas and Li, Yunzhu and Florence, Pete and Tedrake, Russ},
  journal={arXiv preprint arXiv:2009.05085},
  year={2020}
}

@article{li2018learning,
  title={Learning particle dynamics for manipulating rigid bodies, deformable objects, and fluids},
  author={Li, Yunzhu and Wu, Jiajun and Tedrake, Russ and Tenenbaum, Joshua B and Torralba, Antonio},
  journal={arXiv preprint arXiv:1810.01566},
  year={2018}
}

@ARTICLE{6289431,
  author={Mahony, Robert and Kumar, Vijay and Corke, Peter},
  journal={IEEE Robotics \& Automation Magazine}, 
  title={Multirotor Aerial Vehicles: Modeling, Estimation, and Control of Quadrotor}, 
  year={2012},
  volume={19},
  number={3},
  pages={20-32},
  doi={10.1109/MRA.2012.2206474}}

@misc{jax_verify2020,
  title        = {jax\_verify: Neural Network Verification in JAX},
  author       = {{Google DeepMind}},
  year         = {2020},
  howpublished = {\url{https://github.com/google-deepmind/jax_verify}},
  note         = {GitHub repository},
}

@book{rubinstein2013cross,
  title={The cross-entropy method: a unified approach to combinatorial optimization, Monte-Carlo simulation and machine learning},
  author={Rubinstein, Reuven Y and Kroese, Dirk P},
  year={2013},
  publisher={Springer Science \& Business Media}
}

@article{williams2017model,
  title={Model predictive path integral control: From theory to parallel computation},
  author={Williams, Grady and Aldrich, Andrew and Theodorou, Evangelos A},
  journal={Journal of Guidance, Control, and Dynamics},
  volume={40},
  number={2},
  pages={344--357},
  year={2017},
  publisher={American Institute of Aeronautics and Astronautics}
}

@INPROCEEDINGS{11312100,
  author={Serry, Mohamed and Li, Haoyu and Zhou, Ruikun and Zhang, Huan and Liu, Jun},
  booktitle={2025 IEEE 64th Conference on Decision and Control (CDC)}, 
  title={Safe Domains of Attraction for Discrete-Time Nonlinear Systems: Characterization and Verifiable Neural Network Estimation}, 
  year={2025},
  volume={},
  number={},
  pages={5774-5781},
  doi={10.1109/CDC57313.2025.11312100}}

@inproceedings{ARCH-COMP24,
  author    = {Diego Manzanas Lopez and Matthias Althoff and Luis Benet and Clemens Blab and Marcelo Forets and Yuhao Jia and Taylor T Johnson and Manuel Kranzl and Tobias Ladner and Lukas Linauer and Philipp Neubauer and Sophie Neubauer and Christian Schilling and Huan Zhang and Xiangru Zhong},
  title     = {ARCH-COMP24 Category Report: Artificial Intelligence and Neural Network Control Systems (AINNCS) for Continuous and Hybrid Systems Plants},
  booktitle = {Proceedings of the 11th Int. Workshop on Applied Verification for Continuous and Hybrid Systems},
  editor    = {Goran Frehse and Matthias Althoff},
  series    = {EPiC Series in Computing},
  volume    = {103},
  publisher = {EasyChair},
  bibsource = {EasyChair, https://easychair.org},
  issn      = {2398-7340},
  url       = {/publications/paper/WsgX},
  doi       = {10.29007/mxld},
  pages     = {64-121},
  year      = {2024}}

\newpage
\clearpage
\newpage
\begin{appendices}
\newpage

% Change section and subsection title 
\renewcommand{\thesection}{\Alph{section}} % Changes section numbering to A, B, C, etc.

\renewcommand{\thesubsection}{\Alph{section}.\arabic{subsection}}

\section{Reachability of CT analytical dynamics}\label{app:reach_ct_ana_dyn}

% \newpage
\begin{algorithm}[t]
    \caption{Reachability of CT Dynamics with TM Flowpipe. \textcolor{brown}{Comments} are in brown.}
    \label{alg:ct-tm-flowpipe}
    {\small
    \begin{algorithmic}[1]
        \State \textbf{Inputs:} dynamics \(\ctdyn\) (analytical or NN), initial box \(\gX_0=[\underline{x}_0,\overline{x}_0]\), step size \(h\), number of steps \(N\), TM order \(k\), initial remainder estimate \(\epsilon_{\mathrm{init}}\), remainder refinement rounds \(R\), auxiliary-window size \(M\)
        \State \textbf{Outputs:} certified reachable set \(\{\gX_i\}_{i=0}^{N}=\{\underline{x}_i,\overline{x}_i\}_{i=0}^{N}\)
        \vspace{0.5em}

        \vspace{0.5em}
        \State \textcolor{brown}{\# Build initial TM for the initial set: \(x = c + S y\)}
        \State \([\underline{x}_0,\overline{x}_0] \gets \gX_0\)
        \State \(c_0 \gets (\underline{x}_0+\overline{x}_0)/2,\quad S_0 \gets (\overline{x}_0-\underline{x}_0)/2\)
        \State \(X^{(0)} \gets \mtt{BuildLinearTM}(c_0,S_0)\) \Comment{\textcolor{brown}{affine TM over unit box}}
        \State \(\gT^{(0)} \gets \mtt{IdentityTM}()\) \Comment{\textcolor{brown}{initial linear parameterization}}
        \State \(\mathcal{S}^{(0)} \gets \mtt{InitSymbolicState}(M)\) \Comment{\textcolor{brown}{wrapping-control state}}

        \vspace{0.5em}
        \State \textcolor{brown}{\# Main loop}
        \For{$i=0,\dots,N-1$}
            \State \textcolor{brown}{\# (A) Preconditioning / linear reparameterization}
            \State \(\tilde{X} \gets X^{(i)}\big|_{t:=h}\) \Comment{\textcolor{brown}{evaluate previous step at end time}}
            \State \((c, S,\;\gT^{(i+1)},\;\mathcal{S}^{(i+1)}) \gets \mtt{SymbolicStep}(\gT^{(i)},\tilde{X},\mathcal{S}^{(i)})\)
            \State \(X \gets \mtt{BuildLinearTM}(c,S)\) \Comment{\textcolor{brown}{new affine seed \(x=c+Sy\)}}

            \vspace{0.5em}
            \State \textcolor{brown}{\# (B) Polynomial-only Picard iteration}
            \State \(p_k \gets \mtt{PolyPicard}(\ctdyn,\;X,\;h,\;k)\)

            \State \textcolor{brown}{\# (C) Seed and refine the remainder in Picard (soundness)}
            \State \(X^{(i+1)} \gets \mtt{RemainderPicard}\big(\ctdyn,\;X,\;p_k,\;h,\;\epsilon_{\mathrm{init}},\;R\big)\)

            \vspace{0.5em}
            \State \textcolor{brown}{\# (D) Certified reachable set over the step segment}
            \State \(X_{\mathrm{seg}} \gets X^{(i+1)} \circ \gT^{(i+1)}\)
            \State \([\underline{x}_{i+1},\overline{x}_{i+1}] \gets \mtt{EvalNormInterval}(X_{\mathrm{seg}},\;h)\)
        \EndFor
        \vspace{0.5em}
        \State \Return \(\{\underline{x}_i,\overline{x}_i\}_{i=0}^{N}\)
    \end{algorithmic}
    }
\end{algorithm}

Algorithm~\ref{alg:ct-tm-flowpipe} constructs a certified flowpipe for \(\dot{x}(\tau)=\ctdyn(x(\tau))\) by propagating a TM enclosure over \(N\) fixed-duration segments. It first maps the initial box \(\gX_0=[\underline{x}_0,\overline{x}_0]\) to a normalized affine TM \(X^{(0)}=c_0+S_0y\) where $y\in[-1,1]^n$ and $c_0$ and $S_0$ are corresponding constant and linear coefficient (Lines 4--6). This normalization avoids unnecessary conservativeness when evaluating intervals over large and non-symmetric (zero-centered) input domains. It also defines a linear parameterization map $\gT^{(i)}$ that tracks the cumulative normalization (preconditioning) applied on every step, which is initially an identity TM $\gT^{(0)}$ (Line 7). The symbolic state $\gS^{(i)}$ will be used to reduce the overestimation when propagating flowpipes step by step.

Each step then advances the enclosure with four operations. 
\begin{itemize}
    \item (A) \emph{Preconditioning / reparameterization} (Lines 12-14) evaluates the previous step at \(\tau=h\) to obtain an endpoint TM \(\tilde{X}\), and computes a new affine seed \(X_0=c+Sy\) maintaining a short symbolic remainder state to keep computation tractable and subsequent enclosures tight. 
    \item (B) \emph{Polynomial Picard} (Line 16) applies truncated Picard iteration in TM arithmetic to obtain a \(k\)-order polynomial approximation \(p_k\) of the local flow map over \([-1,1]^n\). 
    \item (C) \emph{Remainder Picard} (Line 18) then seeds and refines the interval remainder around \(p_k\) until the enclosure is self-consistent under the Picard update, yielding a sound TM flowpipe segment \(X^{(i+1)}\). 
    \item Finally, (D) the algorithm composes the segment with the accumulated linear parameterization \(\gT^{(i+1)}\) and evaluates it over \([-1,1]^n\) to extract certified box bounds \([\underline{x}_{i+1},\overline{x}_{i+1}]\) as reachable sets for use downstream in planning and learning (Lines 20,21).
\end{itemize}

\section{Reachability of CT Neural dynamics}\label{app:reach_ct_nn_dyn}

In this section, we aim to prove~\Theoref{theo:crown_tm}. \begin{proof}
Let the linear TM input be
\[
g(z)=c_g + A_g z \oplus I_g,
\quad z\in\gZ,\; I_g=[\lowerI_g,\upperI_g].
\]
Equivalently, for every admissible input \(x\in g(\gZ)\), there exist \(z\in\gZ\) and \(r\in I_g\) such that
\[
x = c_g + A_g z + r .
\]
Define the reparameterized network
\[
\tilde f(z,r) := f_\text{NN}(c_g + A_g z + r),
\]
with inputs \((z,r)\in \gZ\times I_g\). By construction,
\[
f_\text{NN}(g(z)) = \tilde f(z,r)
\quad \text{for some } r\in I_g.
\]

Applying CROWN to \(\tilde f\) over the box domain \(\gZ\times I_g\), with the enforced shared-slope constraint \(\lowerW=\upperW\), yields affine bounds
\[
\tileW_z z + \tileW_r r + \tilde b
\;\le\;
\tilde f(z,r)
\;\le\;
\tileW_z z + \tileW_r r + \tilde b,
\]
where \(\tileW_z\in\sR^{n_o\times n_z}\), \(\tileW_r\in\sR^{n_o\times n_i}\), and \(\tilde b\in\sR^{n_o}\).
Since the slopes are shared, the only remaining uncertainty is due to \(r\in I_g\). Over-approximating the contribution of \(r\) by interval arithmetic gives
\[
\tileW_r r \in 
\tileW_r I_g
= [\lowerI,\upperI],
\]
where \(\lowerI,\upperI\) are obtained by elementwise interval evaluation.
Now we define
\[
P := \tileW_z,
\qquad
c := \tilde b,
\qquad
I := [\lowerI,\upperI].
\]
Then for all \(z\in\gZ\),
\[
f_\text{NN}(g(z)) \in c + Pz \oplus I,
\]
which is a linear Taylor model that soundly over-approximates \(f_\text{NN}(g(z))\).
\end{proof}

\section{Certified training algorithm}\label{app:training}

We present our certified training algorithm for DT neural dynamics discussed in~\Secref{subsec:training} in~\Algref{alg:train_dt_dyn}. It includes multi-step loss, reachability regularization and curriculum schedules and can be adapted to train CT neural dynamics and controllers. 
% Please refer to~\Secref{subsec:training} for more detail.

\begin{algorithm}[t]
    \caption{Training of DT Neural Dynamics. 
    % \textcolor{brown}{Comments} are in brown.
    }
    \label{alg:train_dt_dyn}
    {\small
    \begin{algorithmic}[1]
        \State \textbf{Inputs:} dataset \(\gD=\{(x_t^m,u_t^m)\}\), model \(\dtdyn^{\theta}\), target horizon \(T_h^{\max}\), target radius \(\epsilon^{\mathrm{final}}\), weights \(\{w_t\}\), penalty weight \(\lambda\), optimizer \(\mtt{Opt}\), DT reachability engine \(\mtt{DTReach}(\cdot)\)
        \State \textbf{Outputs:} trained parameters \(\theta\)
        \vspace{0.5em}

        \State Reformat \(\gD\) into \(M'\) episodes \(\{(x_{0:T_h^{\max}}^m,u_{0:T_h^{\max}-1}^m)\}_{m=0}^{M'-1}\)
        \For{training iteration \(s=0,\dots,S-1\)}
            \State \textcolor{brown}{\# Curriculum schedules (horizon \& uncertainty)}
            \State \(T_h \gets \mtt{HorizonSchedule}(s;\,T_h^{\max})\) \Comment{\textcolor{brown}{logarithmic increase}}
            \State \(\epsilon \gets \mtt{EpsSchedule}(s;\,\epsilon^{\mathrm{final}},T_h)\) \Comment{\textcolor{brown}{smooth decrease}}

            \State Sample a mini-batch \(\mathcal{B}\subset\{0,\dots,M'-1\}\)
            \State \(\gL_{\text{pred}} \gets 0,\quad \gL_{\text{reach}} \gets 0\)
            \For{each episode \(m\in\mathcal{B}\)}
                \State \textcolor{brown}{\# (A) Autoregressive rollout for prediction loss}
                \State \(\hx_0^m \gets x_0^m\)
                \For{$t=0,\dots,T_h-1$}
                    \State \(\hx_{t+1}^m \gets \dtdyn^{\theta}(\hx_t^m,u_t^m)\)
                    \State \(\gL_{\text{pred}} \mathrel{+}= w_t\|\hx_{t+1}^m-x_{t+1}^m\|_2^2\)
                \EndFor

                \State \textcolor{brown}{\# (B) DT reachability for reachability loss}
                \State \(\gX_0^m \gets \gB_{\epsilon}(x_0^m)\)
                \State \(\{\gX_t^m\}_{t=1}^{T_h} \gets \mtt{DTReach}\big(\dtdyn^{\theta},\gX_0^m,u_{0:T_h-1}^m\big)\)
                \State \(V_{\gR,\epsilon}^m \gets \sum_{t=1}^{T_h}\sum_{j=0}^{n-1}\mtt{width}(\gX_{t,j}^m)\)
                \State \(\gL_{\text{reach}} \mathrel{+}= \log(1+V_{\gR,\epsilon}^m)\)
            \EndFor

            \State \textcolor{brown}{\# (C) Combine losses and update parameters}
            \State Normalize: \(\gL_{\text{pred}} \gets \gL_{\text{pred}}/(|\mathcal{B}|T_h)\), \(\gL_{\text{reach}} \gets \gL_{\text{reach}}/|\mathcal{B}|\)
            \State \(\gL_{\text{total}} \gets \gL_{\text{pred}} + \lambda\,\gL_{\text{reach}}\)
            \State \(\theta \gets \mtt{OptUpdate}(\mtt{Opt},\theta,\nabla_{\theta}\gL_{\text{total}})\)
        \EndFor
        \State \Return \(\theta\)
    \end{algorithmic}
    }
\end{algorithm}

\section{Additional experimental details}\label{app:exp}

\paragraph{T-pushing}
We define the state of the T-pushing task as the concatenation of 3D T object pose (2D position and 1D orientation of the center of mass of the T object) and 2D position of the cylinder pusher. We train an MLP with hidden layers $[96,96,96]$ as the DT neural dynamics. We then train an MLP with hidden layers $[64,64]$ as the CT neural dynamics. We finally train the neural controller (an MLP with hidden layers $[64,64]$) with fixed CT neural dynamics for multi-step rollout. The action of this task is defined as the 2D velocity of the cylinder pusher. We collect our training data from the simulator Pymunk~\cite{blomqvist2022pymunk}.

\paragraph{Quadrotor}

We use a standard rigid-body quadrotor model~\cite{6289431} with world-frame translation, ZYX Euler angles, and body-frame angular rates.

The state is
\[
x =
\begin{bmatrix}
x \quad y \quad  z \quad  \dot x \quad  \dot y \quad  \dot z \quad 
\phi \quad  \theta \quad  \psi \quad p \quad  q \quad  r
\end{bmatrix}^\top \in \mathbb{R}^{12},
\]
where $(x,y,z)$ are world positions, $(\dot x,\dot y,\dot z)$ are world velocities,
$(\phi,\theta,\psi)$ are roll, pitch, and yaw (ZYX convention), and $(p,q,r)$ are body angular rates.
The control input is
\[
u =
\begin{bmatrix}
u_1 & u_2 & u_3 & u_4
\end{bmatrix}^\top,
\]
where $u_1$ is the total thrust magnitude along the body $z$-axis and $(u_2,u_3,u_4)$ are body torques. In our experiments, the yaw torque input is fixed to zero ($u_4 = 0$),  removing active yaw control, which is sufficient for the planar and position-focused maneuvers considered in this work.

Its position kinematics are as follows.
\[
\begin{aligned}
\dot x &= \dot x, \\
\dot y &= \dot y, \\
\dot z &= \dot z .
\end{aligned}
\]

Let $R_{WB}$ denote the body-to-world rotation matrix under ZYX Euler angles and
\[
b_3 = R_{WB} e_3 =
\begin{bmatrix}
\cos\phi\sin\theta\cos\psi + \sin\phi\sin\psi \\
\cos\phi\sin\theta\sin\psi - \sin\phi\cos\psi \\
\cos\phi\cos\theta
\end{bmatrix}.
\]
The translational dynamics in the world frame are
\[
\begin{aligned}
\ddot x &= \frac{u_1}{m} b_{3x}, \\
\ddot y &= \frac{u_1}{m} b_{3y}, \\
\ddot z &= \frac{u_1}{m} b_{3z} - g ,
\end{aligned}
\]
where $m$ is the mass and $g$ is gravitational acceleration.

The Euler angle rates driven by body angular rates $(p,q,r)$ are
\[
\begin{aligned}
\dot\phi &= p + \sin\phi \tan\theta \, q + \cos\phi \tan\theta \, r, \\
\dot\theta &= \cos\phi \, q - \sin\phi \, r, \\
\dot\psi &= \frac{\sin\phi}{\cos\theta} q + \frac{\cos\phi}{\cos\theta} r .
\end{aligned}
\]
% This representation has a singularity at $\theta = \pm \frac{\pi}{2}$.

Assuming a diagonal inertia matrix $J=\mathrm{diag}(J_x,J_y,J_z)$, the rigid-body rotational dynamics are
\[
\begin{aligned}
\dot p &= \frac{J_y - J_z}{J_x} q r + \frac{u_2}{J_x}, \\
\dot q &= \frac{J_z - J_x}{J_y} p r + \frac{u_3}{J_y}, \\
\dot r &= \frac{J_x - J_y}{J_z} p q + \frac{u_4}{J_z}.
\end{aligned}
\]

We train an MLP with hidden layers $[64,64]$ as the DT dynamics for planning. Its state is defined as the 3D position of the quadrotor without including all remaining states in GT analytical ODE dynamics. We apply velocity control with a neural controller (an MLP with hidden layers $[64,64]$) accepting the desired 3D velocity command as the input and output CT actions $u$. We collect training data from the analytical dynamics with a nominal PD controller.

\end{appendices}

\end{document}